\documentclass{article}

\usepackage[preprint]{neurips_2026}

\usepackage[utf8]{inputenc} 
\usepackage[T1]{fontenc}    
\usepackage{hyperref}       
\usepackage{url}            
\usepackage{booktabs}       
\usepackage{amsfonts}       
\usepackage{nicefrac}       
\usepackage{microtype}      
\usepackage{xcolor}         

\usepackage{amsmath}
\usepackage{graphicx}
\usepackage{multirow}
\usepackage[table]{xcolor}
\usepackage{graphicx}
\usepackage{multirow}
\usepackage{amsmath}
\usepackage{amsthm}
\usepackage[table]{xcolor}
\usepackage{arydshln}
\usepackage{soul} 
\usepackage[dvipsnames]{xcolor}
\usepackage[dvipsnames, table]{xcolor}
\usepackage{graphicx}
\usepackage{subcaption}
\usepackage{lipsum}
\usepackage{multirow}
\usepackage{array}
\usepackage{xspace}
\usepackage[most]{tcolorbox}
\usepackage{enumitem}
\usepackage{wrapfig}

\usepackage[capitalise]{cleveref}
\usepackage{makecell}

\newtheorem{theorem}{Theorem}

\title{CSFlow: Aligning Flow Matching with 

Human Contrast Sensitivity}
\vspace{-0.1cm}

\author{%
  Malgorzata Galinska\thanks{Equal contribution.}
  \quad
  Bart Pogodzinski\footnotemark[1]
  \quad
  Jan Eric Lenssen 
  \\[0.5em]
  Max Planck Institute for Informatics, Saarland Informatics Campus \\[0.4em]
  \texttt{\{mgalinsk, bpogodzi, jlenssen\}@mpi-inf.mpg.de}
}

\begin{document}

\maketitle

{
\begin{figure}[h]
\centering
\vspace{-20pt}
\includegraphics[width=1.0\linewidth]{figures/teaser-merged-2.pdf}

\caption{Contrast Sensitive Flow (CSFlow) connects the human eye's Contrast Sensitivity Function (CSF)~\citep{csf} (left) to the iterative denoising steps of flow matching (right), providing a weighting scheme grounded in human vision. Information gain in Fourier space of each denoising step is weighted by the perceptual importance of generated frequencies, allocating more capacity to frequencies to which the human eye is more sensitive.}
\vspace{-0.2cm}

\label{fig:teaser}
\label{fig:CSF-barten}
\end{figure}
}

\begin{abstract}
\vspace{-0.1cm}

We introduce Contrast Sensitive Flow (CSFlow), a weighting scheme that connects the human eye's Contrast Sensitivity Function (CSF) to the iterative denoising steps of flow matching. 
Because real-world images concentrate signal at low spatial frequencies, these components reach high signal-to-noise ratio earlier during continuous diffusion than high-frequency components. When generating images with diffusion or flow matching models, this induces a soft autoregressive structure in Fourier space, where coarse image content stabilizes before fine detail. 
Meanwhile, the human visual system is unequally sensitive to spatial frequencies: very low and very high frequencies require significantly higher contrast to be perceived. 
We for the first time
merge these observations through two contributions: (1) a metric that estimates which frequencies are generated at each reverse flow interval and (2) timestep weights obtained by aligning the frequencies generated at each noise level with human contrast sensitivity. We validate our contributions experimentally showing that these weights can improve generative performance by lowering FID by $4.7\%$, increasing Inception Score by $2.2\%$ and improving GenEval scores by $2.5\%$ using inference-only timestep modification or short fine-tuning.
Qualitatively, we find that our CSFlow weights lead to better visual realism and less cartoonish appearance of generated images.

\end{abstract}

\section{Introduction}
Diffusion~\citep{diffusion1, ddpm} and flow matching~\citep{lipman2022flow, liu2022flow} models have recently become some of the most successful paradigms for image synthesis, enabling the generation of diverse and high-quality images~\citep{pixelgen}. Although model architectures, training objectives, etc., can differ, 
one of the \emph{most common} objectives is to produce high-quality images that are viewed and judged by humans, with some limited training and inference compute budget.

Research on the human eye shows that there exists a \emph{window of well-noticable spatial frequencies} for human perception (c.f. Fig~\ref{fig:CSF-barten}, left)~\citep{csf}. 
Certain image variations, such as changes in very low or high frequency components (e.g., very coarse gradients or texture details), are largely invisible to the human eye, even if the contrast is high. Meanwhile, other frequencies, mostly located in intermediate frequency bands, are easy to spot even with low contrast. Consequently, unnatural representation of the latter is most noticeable to the human eye.

Recent works in generative modeling with diffusion and flow models strongly connect the different stages of the reverse process to the reconstruction of different spatial frequencies \citep{spectral, fourier}. Over the course of the reverse process, images are generated in a coarse-to-fine manner, where information is added in low frequencies first before adding high frequency details towards the end. In a setting with a limited compute budget and model capacity, this poses the question of whether an image generation model should allocate the budget to reconstruct frequencies that are invisible to us humans, or if it should rather allocate its capabilities to the most visible ones. With this work, we are the first to connect these two phenomena and show that weighting the information gain in reverse flow matching steps to favor steps that are important for human perception (c.f. Fig.~\ref{fig:CSF-barten}, right) can improve the quality of generated images.

Concretely, we propose Contrast Sensitive Flow (CSFlow), a procedure to calculate \emph{perception-driven weights} for flow matching. The weighting can be applied in two ways: (1) to modify the step size of inference steps, increasing the approximation accuracy of important regions of the generation process; (2) to weight the importance of certain $t$ during model training, favoring the allocation of more model capacity to specific generation intervals. To this end, we describe a metric to estimate which frequencies are generated in which parts of the flow matching reverse process and how contrast sensitive weights can subsequently be obtained from the resulting connection.

We conduct experiments on representatives of current state-of-the-art pixel-space flow matching models, i.e., PixelGen~\citep{pixelgen} and JiT~\citep{jit}, evaluating the GenEval~\citep{geneval}, FID, and Inception scores, which represent visual quality obtained by applying our training weights and inference step sizes. Our findings suggest that perception-driven weighting during training and inference consistently improves the visual quality of generation 
by 4.7\% in FID, 2.2\% in Inception Score, and 2.5\% in GenEval scores. Qualitatively, we find that our generated images appear to be more realistic and less \emph{cartoonish} than those from the baselines.

\section{Related Work}
Diffusion models can change the relative importance of denoising stages through the noise schedule, distribution of timesteps ($t$), explicit loss weights, and the choice of the prediction target \mbox{($\epsilon$, $x_{clean}$, $v$)}. Those have been implemented across many works with varying backgrounds and motivation.

DDPM~\citep{ddpm} derives timestep-dependent loss terms from the ELBO, but obtains better sample quality with a simplified denoising objective that drops the ELBO-derived weighting.
Improved DDPM~\citep{iddpm} introduces a cosine noise schedule, which changes how signal is retained across the timesteps and makes the time discretization leading to improved results.
EDM~\citep{edm} shows that the treatment of different noise levels is shaped by separable design choices, including the training noise-level distribution, the denoising loss weight, network parametrization, and the sampling-time discretization.
Min-SNR-$\gamma$~\citep{minsnrgamma} treats the diffusion as a multi-task-learning problem and notices that training convergence and final model quality can improve with loss weighted according to the clamped signal-to-noise ratios, preventing the high-loss (and thus high gradient) regions from overpowering the optimization -- resulting in downweighing high-noise $t$ in $x_{clean}$-prediction and downweighing the low-noise regions in $\epsilon$-prediction.
EDiff-I~\citep{ediffi} takes a different approach by training an ensemble of denoisers for individual noise intervals, essentially mitigating the issue some noise levels overpowering others during training, at the cost of additional training and system complexity.

High-resolution and flow models further modify timestep importance through schedule shifts and non-uniform sampling. Simple Diffusion~\citep{sid} argues that increasing image resolution changes the effective SNR of low-frequency structure: averaging over more pixels reduces the noise variance of coarse image content. They therefore shift the noise schedule for high-resolution pixel-space diffusion so that the \textit{effective} SNR at a reference resolution is preserved.
Stable Diffusion 3~\citep{sd3} applies a related resolution-dependent timestep shift in latent-space rectified flows, selecting the shift through inference-time sampling experiments and then using it during both high-resolution training and inference. It also studies non-uniform training-time timestep sampling, including logit-normal sampling, which emphasizes intermediate timesteps and vanishes near the endpoints.
Schedule On the Fly~\citep{ye2025schedule} targets inference-time allocation by adding a noise level predictor via reinforcement learning. 
All of the above methods aim to improve the understanding and learnability of diffusion models but do not align it with human perception.

P2~\citep{p2} aims to align the generation task to human perception by observing that late denoising stages mostly correspond to noise-cleanup phase, where no visible-to-humans generative decisions are left to be made. They heuristically propose loss weights, which downweighs late denoising stages. While the approach shares the general motivation with our work, our method is grounded in a mathematical connection, deriving timestep weighting directly from the contrast sensitivity function, rather than from coarse observations.

\vspace{-0.2cm}
\section{Preliminaries}

Our work is closely related to spatial frequency representation of images, and human eye sensitivity to spatial frequencies.
\vspace{-0.2cm}

\begin{wrapfigure}{r}{0.6\textwidth}
\vspace{-1cm}
\centering
    \begin{minipage}{0.48\linewidth}
        \centering
        \includegraphics[width=\linewidth]{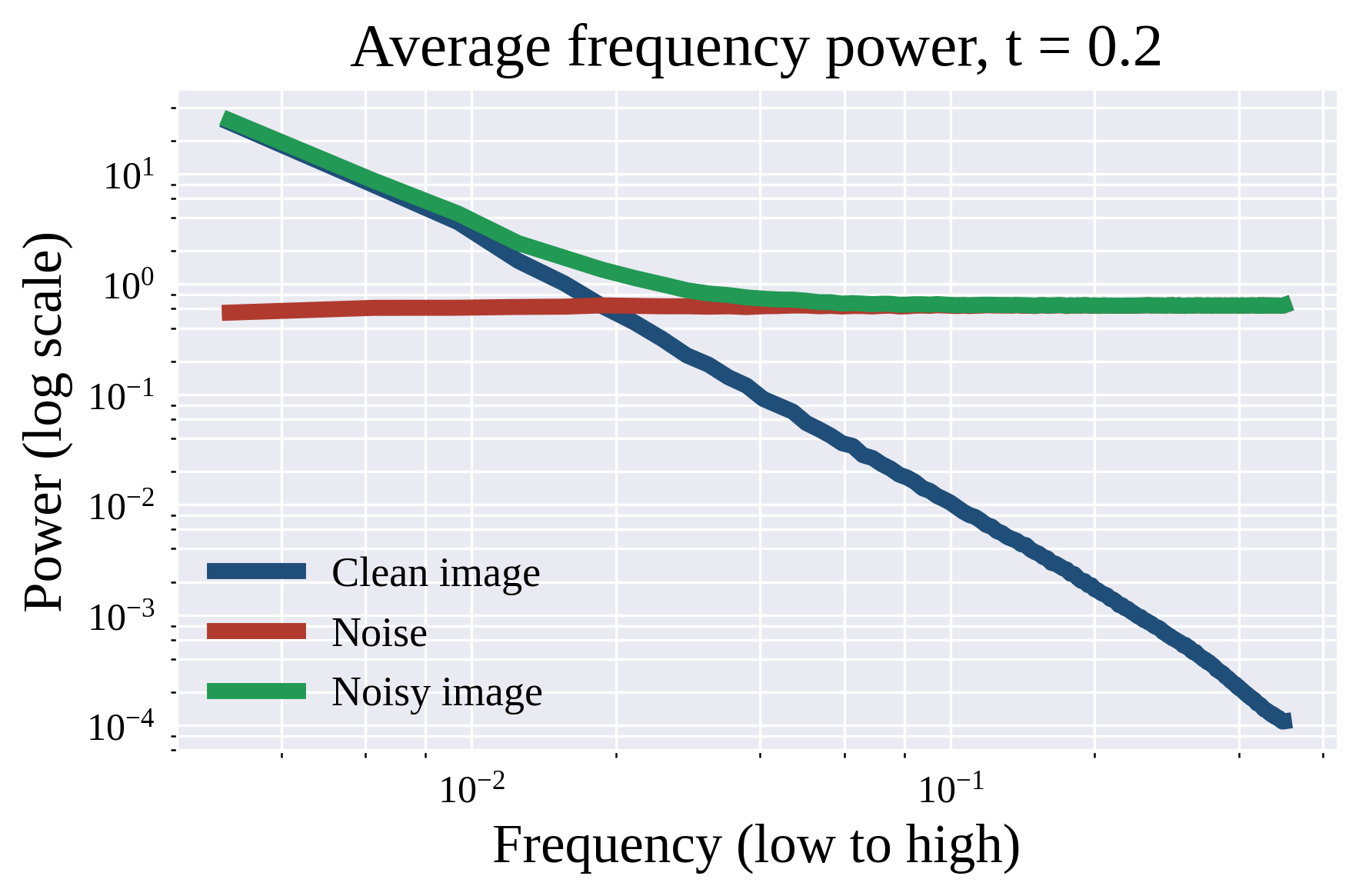}
    \end{minipage}
    \hfill
    \begin{minipage}{0.48\linewidth}
        \centering
        \includegraphics[width=\linewidth]{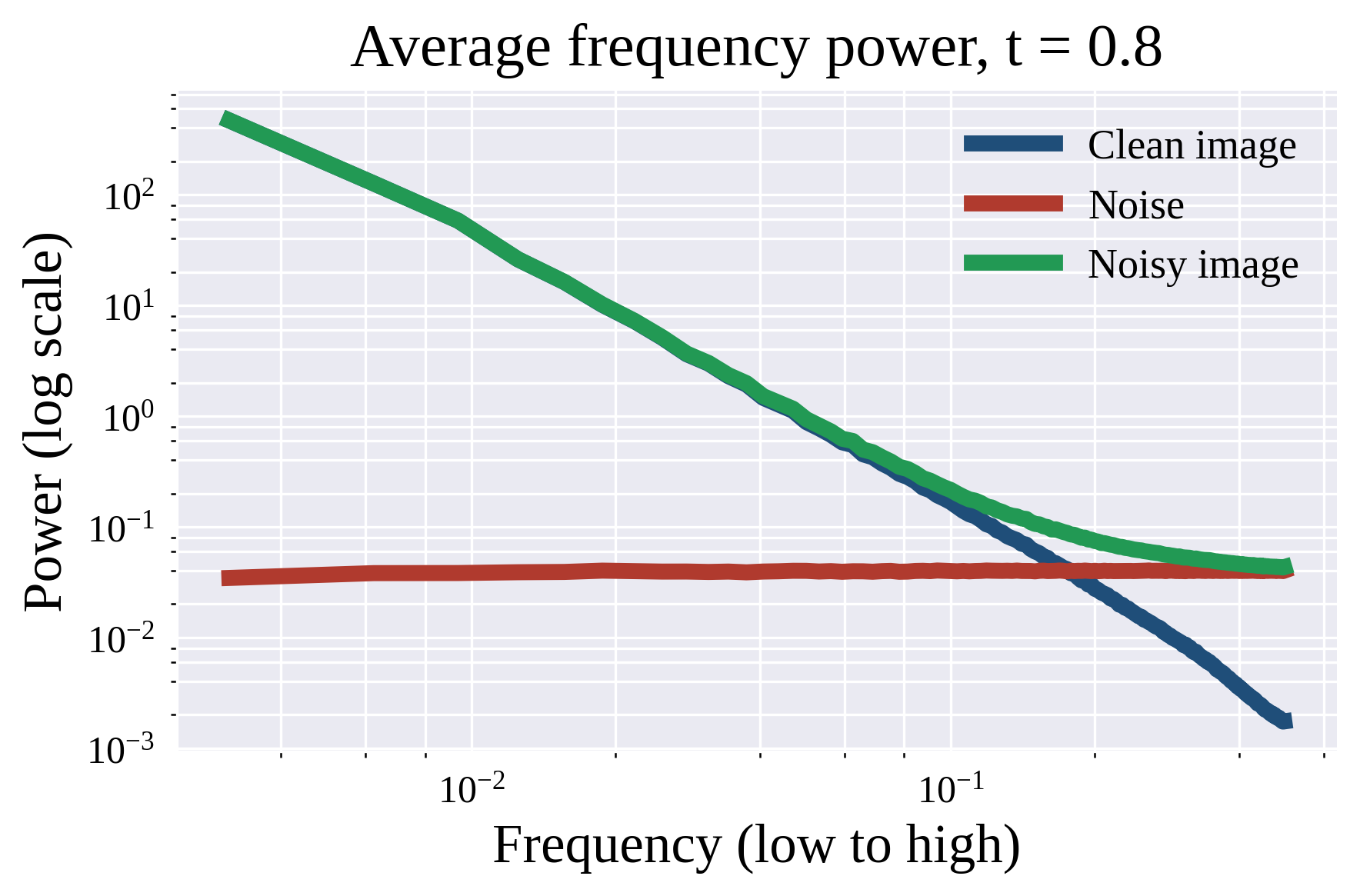}
    \end{minipage}
    \caption{Comparison of average frequency strengths between two denoising stages: \(t=0.2\) -- early stage, mostly noise, and \(t=0.8\) -- late stage, mostly data. Natural image power spectrum follows a power law and noise power spectrum is constant. The noisy image frequencies (green line) align with the clean image spectrum (blue) and noise spectrum (red) on different ranges depending on the \(t\) -- the later the stage, the higher data frequency band gets reconstructed in the noisy image. }
\vspace{-0.4cm}
    \label{fig:rapsd}
\end{wrapfigure}

\subsection{Frequency content of images}
The frequency domain provides a representation of images by describing them in terms of their spatial frequency components rather than their pixel intensities,
making it useful for image analysis and compression because it highlights patterns such as edges, textures, and repetitive structures that may be less obvious in pixel space.

To decompose an image into spatial frequencies that create it, we use the \emph{two-dimensional Discrete Fourier Transform} (2DDFT). 

One property of natural images that is crucial to our method is that their power spectra, on average, follow a power law, with low frequencies having orders of magnitude stronger signals than the high frequencies. As discussed in \citep{spectral, fourier}, this results in imbalanced corruption of the frequency information during the diffusion forward process (c.f. Fig.~\ref{fig:rapsd}): even weak noise can destroy the highest-frequency information, but a much stronger noise is necessary to destroy the higher frequencies. As a result, during the generation step at some noise level $t$, only a narrow frequency band of frequencies can be considered "being generated", since lower frequencies already have too high a signal-to-noise (SNR) ratio to meaningfully change, while the higher ones will remain completely overpowered by the noise after the step. 

\subsection{Human frequency sensitivity}
The human visual system is not equally sensitive to all spatial frequencies: usually very smooth gradients (low frequencies) and tiny textural details (high frequencies) become hard to spot. An~established way to measure it is via the Contrast Sensitivity Function (CSF), which measures how much contrast is necessary for an average human eye to notice that a pattern is shown. One of the most commonly used CSF models was proposed in~\citep{csf}, which can be observed in Fig.~\ref{fig:CSF-barten}. For more details on the CSF, see Appendix \ref{csf-details}.

\section{Contrast Sensitive Flow} \label{method-chapter}
In this section, we explain how we incorporate human frequency sensitivity into the image generation of diffusion models. First, we derive \emph{perception-driven weights} that indicate the importance of every generation stage with respect to human sensitivity to the frequencies being reconstructed in these stages. We start by analyzing how the recoverable frequency content evolves over time during the denoising process in Sec.~\ref{sec:recoverable-frequency-content} and how much information is gained in each step in Sec.~\ref{sec:information_gain}. Then, we incorporate the CSF and distribute weights over time intervals in Sec.~\ref{sec:weighting_method}. In Sec.~\ref{sec:weights_interpolation}, we then explain how these weights can be used in practice to bias the model towards more important denoising stages.

\subsection{Recoverable frequency content}
\label{sec:recoverable-frequency-content}

During denoising, different spatial frequencies become distinguishable from noise at different stages. We therefore want to estimate, for each frequency \(f\) and time \(t\), how much of the spectral power present in the noisy image is attributable to the image component rather than to noise.

Let \(\mathbf{x}_{clean} \sim D\), \(\boldsymbol{\epsilon} \sim \mathcal{N}(\mathbf{0},\mathbf{I})\), and $\mathbf{x}_t = a_t \mathbf{x}_{clean} + b_t \boldsymbol{\epsilon}$, where \(a_t\) and \(b_t\) define the noise schedule. 
For a frequency \(f\), we denote the corresponding 2D DFT coefficient as \(F_f(\cdot)\) , power as $P_f(\mathbf{x}) = |F_f(\mathbf{x})|^2$, and define the expected clean image power $ S_f = \mathbb{E}\left[P_f(\mathbf{x}_{clean})\right].$

The quantity we would like to measure is the fraction of expected spectral power in \(\mathbf{x}_t\) that comes from the image component, i.e., the fraction of retained image signal at denoising state \(t\):
\begin{align}
    r_\mathrm{signal}(f,t) = 
    \frac{
        \mathbb{E}\left[P_f(a_t\mathbf{x}_{clean})\right]
    }{
        \mathbb{E}\left[P_f(\mathbf{x}_t)\right]
    }.
\end{align}

Values close to \(0\) indicate that the frequency is noise-dominated (not generated), while values close to \(1\) indicate that it is image-dominated and will not significantly change with further denoising.
The following theorem shows how to measure $r_\mathrm{signal}$ in closed-form.

\begin{theorem}[Closed form of retained signal]

\label{thm:retained-signal}

Assume that $\boldsymbol{\epsilon}\sim \mathcal{N}(0,1)$ is independent of $\mathbf{x}_{clean}$ and $S_f$, $a_t$, and $b_t$ are defined as above.
Then,
\begin{align}
    r_\mathrm{signal}(f,t) = 
    \frac{
        \mathbb{E}\left[P_f(a_t\mathbf{x}_{clean})\right]
    }{
        \mathbb{E}\left[P_f(\mathbf{x}_t)\right]
    }
    =
    \frac{
        a_t^2 S_f
    }{
        a_t^2 S_f + b_t^2
    }\text{.}
\end{align}

\end{theorem}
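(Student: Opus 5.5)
The plan is to work coefficient-wise in Fourier space and use linearity of the DFT together with independence of the noise. Since $F_f$ is linear, $F_f(\mathbf{x}_t) = a_t F_f(\mathbf{x}_{clean}) + b_t F_f(\boldsymbol{\epsilon})$. The numerator is immediate: $\mathbb{E}[P_f(a_t\mathbf{x}_{clean})] = \mathbb{E}[|a_t F_f(\mathbf{x}_{clean})|^2] = a_t^2 S_f$, using only that $a_t$ is a real deterministic scalar.

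For the denominator I will expand the squared modulus of a sum of complex numbers:
\begin{align}
|F_f(\mathbf{x}_t)|^2 = a_t^2 |F_f(\mathbf{x}_{clean})|^2 + b_t^2 |F_f(\boldsymbol{\epsilon})|^2 + 2 a_t b_t \,\mathrm{Re}\bigl(F_f(\mathbf{x}_{clean})\overline{F_f(\boldsymbol{\epsilon})}\bigr).
\end{align}
I then take expectations term by term. The first term gives $a_t^2 S_f$. For the cross term, independence of $\boldsymbol{\epsilon}$ and $\mathbf{x}_{clean}$ factorizes the expectation into $\mathbb{E}[F_f(\mathbf{x}_{clean})]\,\overline{\mathbb{E}[F_f(\boldsymbol{\epsilon})]}$. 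Since $\mathbb{E}[\boldsymbol{\epsilon}]=\mathbf{0}$ and $F_f$ is linear, this is zero. Integrability is not an issue, since $S_f<\infty$ is implicitly assumed and Gaussian moments are finite.

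The remaining step is to compute $\mathbb{E}[|F_f(\boldsymbol{\epsilon})|^2]$. Writing $F_f(\boldsymbol{\epsilon}) = \sum_n \epsilon_n \omega_{f,n}$ with $|\omega_{f,n}|$ constant, white noise gives
\begin{align}
\mathbb{E}|F_f(\boldsymbol{\epsilon})|^2 = \sum_n |\omega_{f,n}|^2 .
\end{align}
This value is the same for every $f$. It equals $1$ under the unitary (orthonormal) DFT normalization, which I will adopt, with the convention stated explicitly since it is also the convention in which $S_f$ is measured. Under an unnormalized DFT one obtains a factor $N$ (the number of pixels), which can be absorbed into $S_f$ by rescaling. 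This constant-power property of white noise matches the flat noise spectrum in Fig.~\ref{fig:rapsd}.

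The main point needing care is this normalization issue, rather than any hard estimate. I will also note that this holds for every frequency including the DC and Nyquist terms, where the coefficient is real. Even there, the second moment is still $\sum_n|\omega_{f,n}|^2$, so no case split is actually needed. Combining the three terms gives $\mathbb{E}[P_f(\mathbf{x}_t)] = a_t^2 S_f + b_t^2$, and dividing the numerator by this yields the claimed ratio.
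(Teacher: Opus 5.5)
Your proof is correct and follows the same route as the paper: linearity of the DFT, expansion of the squared modulus, vanishing of the cross term by independence and zero mean, and $\mathbb{E}|F_f(\boldsymbol{\epsilon})|^2 = 1$. Your explicit computation $\mathbb{E}|F_f(\boldsymbol{\epsilon})|^2=\sum_n |\omega_{f,n}|^2$, together with the remark that this equals $1$ only under the unitary DFT normalization (and equals $N$ otherwise, which can be absorbed into $S_f$), makes precise what the paper simply invokes as a ``flat unit expected power assumption.''
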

The proof is given in Appendix~\ref{apn:proof-retained-signal}.
\(r_\mathrm{signal}\) can also be seen as a normalized signal-to-noise ratio (SNR). With

\begin{align}
    SNR_f(t)
    =
    \frac{
        P_f(a_t\cdot \mathbf{x}_{clean})
    }{
        P_f(b_t\cdot \mathbf{\epsilon})
    } = 
    \frac{
        a_t^2 S_f
    }{
        b_t^2
    },
\end{align}
we have
\begin{align}
    r_\mathrm{signal}(f,t)
    =
    \frac{SNR_f(t)}{1+SNR_f(t)}.
\end{align}

Unlike raw SNR, \(r_\mathrm{signal}(f,t)\in[0,1]\) holds, making it convenient for comparing how much of a frequency is already \emph{generated} at a specific noise level $t$. 

\subsection{Denoising step information gain from $r_\mathrm{signal}$} 
\label{sec:information_gain}

While \(r_\mathrm{signal}(f,t)\) describes how recoverable a frequency \(f\) is at a single denoising state, we are interested in which frequencies become newly recoverable during each denoising interval. 
We therefore define the stepwise increase
\begin{align}
    \label{eq:delta-rsignal}
    \Delta r_\mathrm{signal}(f,t,\Delta t)
    =
    r_\mathrm{signal}(f,t+\Delta t) - r_\mathrm{signal}(f,t).
\end{align}
We use $\Delta r_\mathrm{signal}$ as a metric for the amount of \textit{newly generated} frequency content over the interval \([t,t+\Delta t]\). Thus, $\Delta r_\mathrm{signal}$ indicates the \emph{expected fraction of total frequency information we gain} at a step from $t$ to $t+\Delta t$.
Since \(r_\mathrm{signal}\) is non-decreasing in our noise schedule (and all commonly used), \(\Delta r_\mathrm{signal}(f,t,\Delta t)\geq 0\). We can also define a continuous version of $\Delta r_\mathrm{signal}$ as
$\partial_t r_\mathrm{signal}(t,f)$ 
indicating how frequencies \emph{gain information} at timestep $t$. 

Figure~\ref{fig:r-signal-three-pngs} visualizes \(r_\mathrm{signal}\) and \(\partial_t r_\mathrm{signal}\). 
Lower spatial frequencies become data-dominated earlier in the denoising process, while higher frequencies become recoverable later. 
This shows that the denoising process reconstructs data frequency content hierarchically, in a coarse-to-fine manner. We include the implementation details of estimating the weights in Appendix \ref{rapsd-details}.

\begin{figure}[t]
  \centering

  \begin{minipage}[b]{0.32\linewidth}
    \centering
    \includegraphics[width=\linewidth]{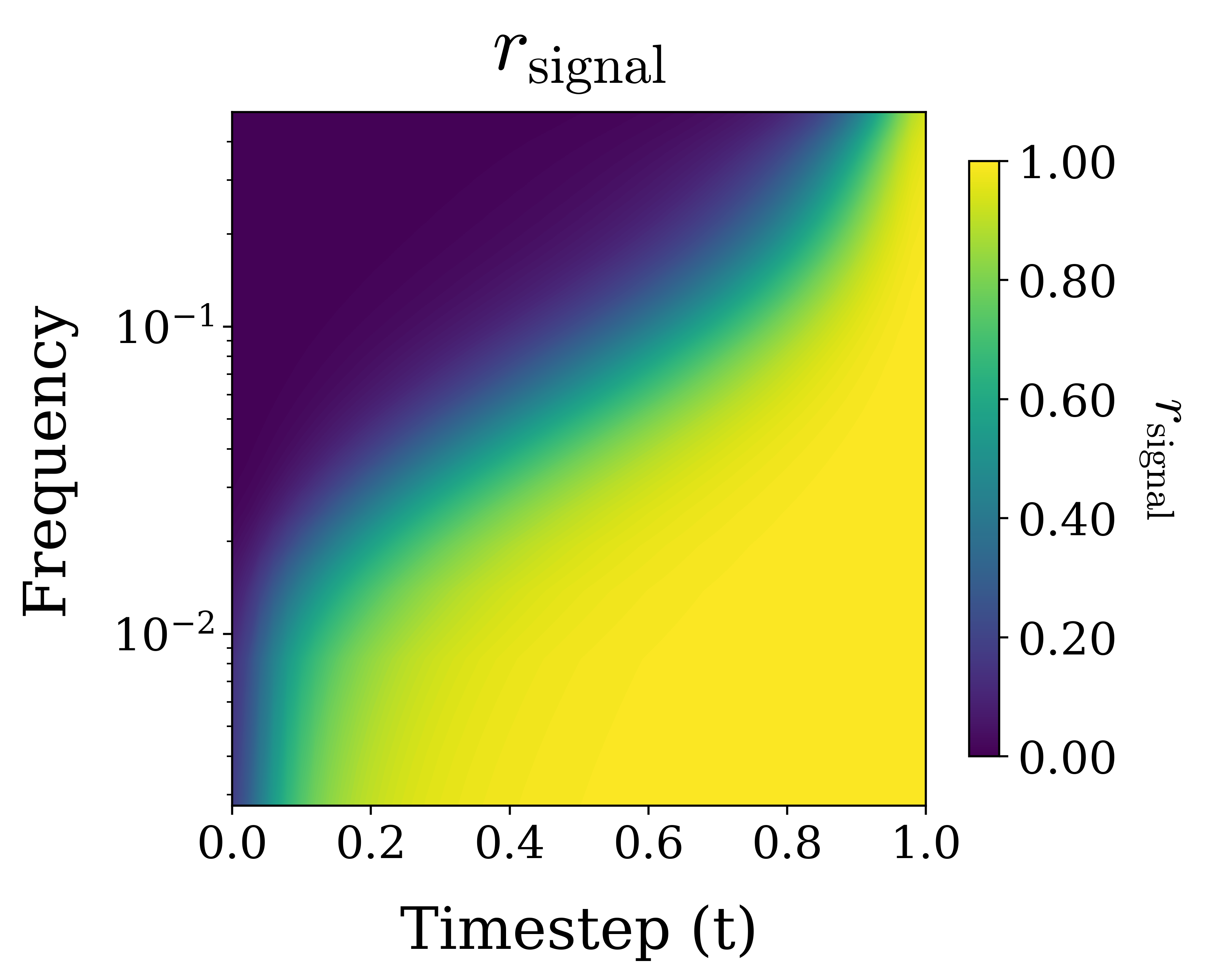}
  \end{minipage}
  \hfill
  \begin{minipage}[b]{0.32\linewidth}
    \centering
    \includegraphics[width=\linewidth]{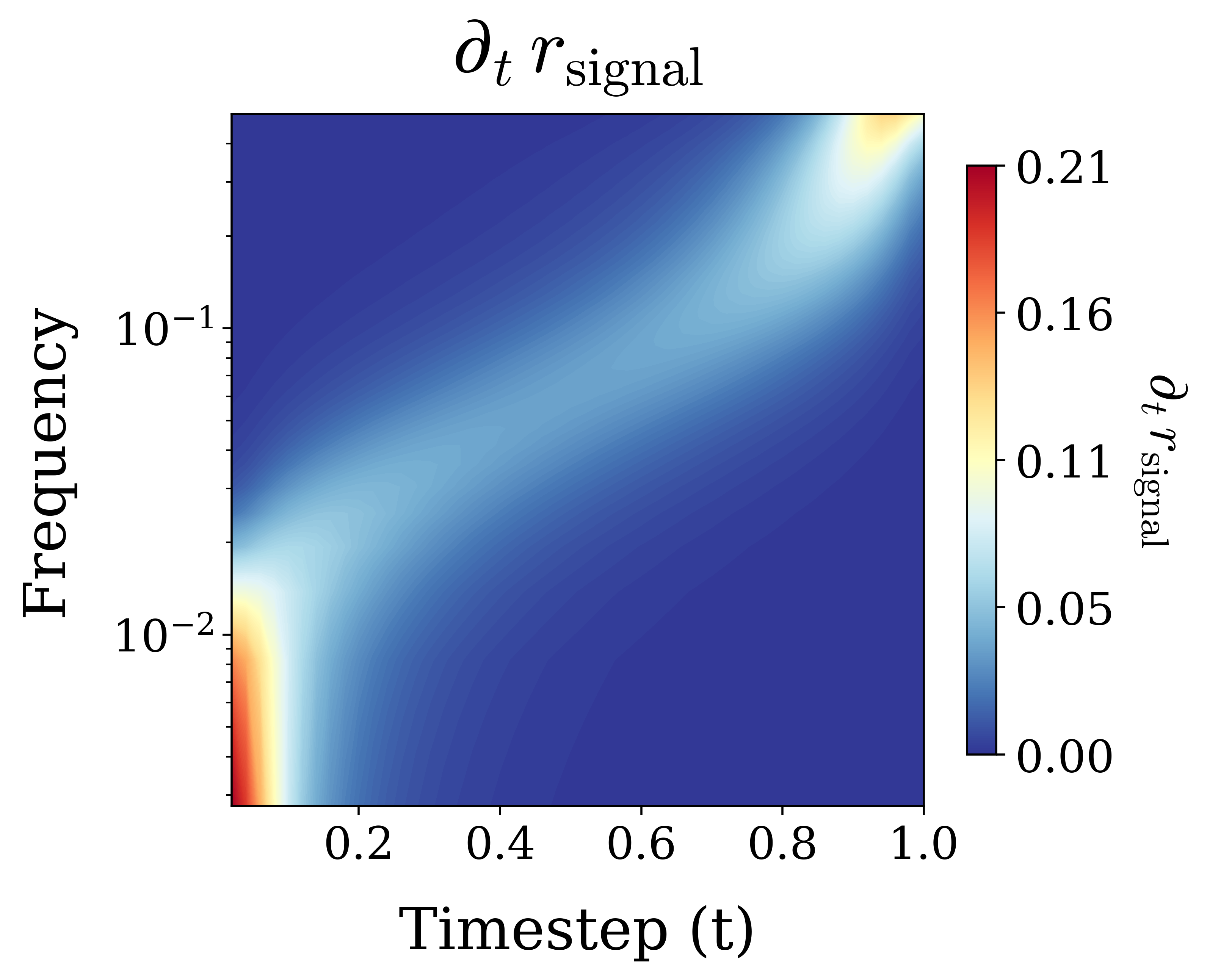}
  \end{minipage}
  \hfill
  \begin{minipage}[b]{0.32\linewidth}
    \centering
    \includegraphics[width=\linewidth]{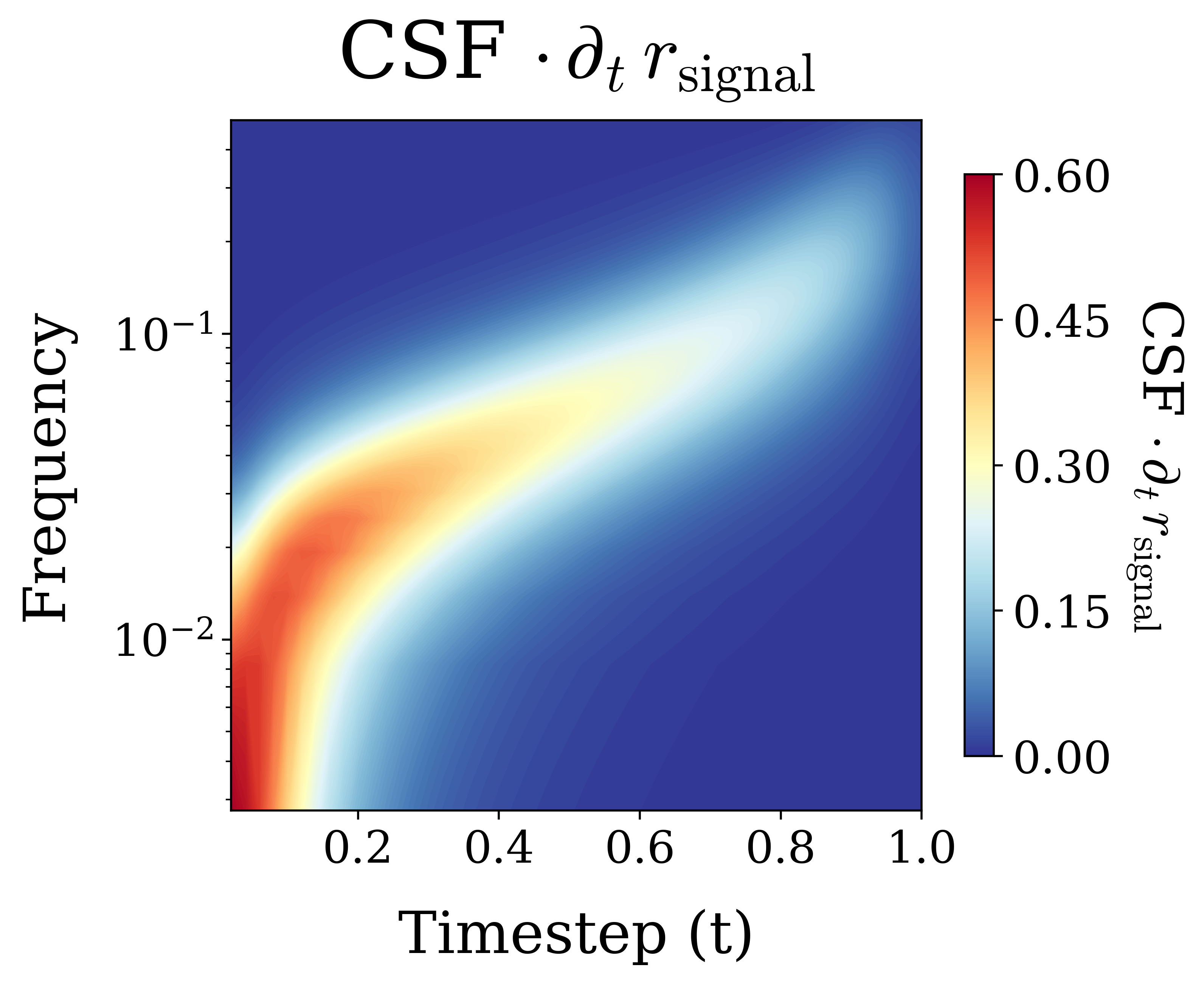}
  \end{minipage}

  \caption{$r_\mathrm{signal}$ (left), $\partial_t r_\mathrm{signal}$ (middle) and $CSF \cdot \partial_t r_\mathrm{signal}$ (right) calculated using ImageNet \(256 \times 256\) training dataset, \(500\) linear time values \(t\) and linear noise schedule \(a_t = t, \ b_t = 1-t\). Frequencies are plotted in the log space. In the plots \(t=0\) corresponds to pure noise and \(t=1\) to the clean image, therefore the denoising process happens from left to right. First two plots show that the denoising process reconstructs spatial frequencies \emph{hierarchically}, in a coarse-to-fine manner. In the last plot high frequency values of \(\Delta r_\mathrm{signal}\) are supressed while low and mid-range frequencies are amplified, which aligns with the CSF.}
  \label{fig:r-signal-three-pngs}
\end{figure}

\subsection{CSF-based interval weighting}
\label{sec:weighting_method}

Our goal is to assign larger weights to denoising intervals in which the newly recovered data-attributable frequencies are more important for human perception. 
We quantify perceptual importance using the contrast sensitivity function \(CSF\). 
Since \(CSF\) is defined over frequencies in cycles per degree of visual angle, while our image frequencies are measured in cycles per pixel, we first convert image frequency bins to cycles per degree before evaluating \(CSF\). 
We assume a pixel size of \(0.0114\) cm and a viewing distance of \(50\) cm. 
The conversion of units is described in Appendix~\ref{app:csf-conversion}.

Recall that $\Delta r_{\mathrm{signal}}(f,t,\Delta t)$ measures how much data-attributable content at frequency $f$ becomes newly recoverable during the interval $[t,t+\Delta t]$. In continuous notation, this corresponds to the instantaneous increase $\partial_t r_{\mathrm{signal}}(f,t)$. 
We define the \emph{raw perceptual weight} at time $t$ as the CSF-weighted average of the newly recovered frequency content:
\begin{align}
\label{eq:raw_csf_score}
    \widetilde{w}_{\mathrm{CSFlow}}(t)
    =
    \frac{
        \int_{\Omega}
        \partial_t r_{\mathrm{signal}}(f,t) \cdot CSF(f) \, df
    }{
        \int_{\Omega}
        \partial_t r_{\mathrm{signal}}(f,t) \, df
    }.
\end{align}
Here, \(\Omega\) denotes the set of frequency bins. The denominator makes the score relative: intervals are weighted according to \textit{which} frequencies are recovered, rather than simply how much total spectral content is recovered. Equivalently, $\widetilde{w}_{\mathrm{CSFlow}}(t)$ is the average CSF value over the frequencies that become recoverable around time $t$. If $CSF(f)\equiv 1$, all times receive the same raw score.

We normalize these scores over the denoising trajectory:
\begin{align}
\label{eq:csflow_weight}
    w_{\mathrm{CSFlow}}(t)
    =
    \frac{
        \widetilde{w}_{\mathrm{CSFlow}}(t)
    }{
        \int_0^1
        \widetilde{w}_{\mathrm{CSFlow}}(\tau)
        \, d\tau
    }.
\end{align}
In practice, we approximate Eq.~\ref{eq:raw_csf_score} with finite differences using \(\Delta r_{\mathrm{signal}}(f,t,\Delta t)\), and approximate the integral in Eq.~\ref{eq:csflow_weight} as a sum over the chosen timestep grid. 

Fig.~\ref{fig:weights_and_steps} (left) shows the resulting weights for an example setting. 
The largest weights occur in the early-to-middle part of the denoising process, indicating that these timesteps recover the frequencies most aligned with human visual sensitivity. We also compare it with other weights in Fig.~\ref{fig:weights_and_steps} (right).

\begin{figure}[t]
  \centering
  \begin{minipage}[b]{0.48\linewidth}
    \centering
    \includegraphics[width=\linewidth]{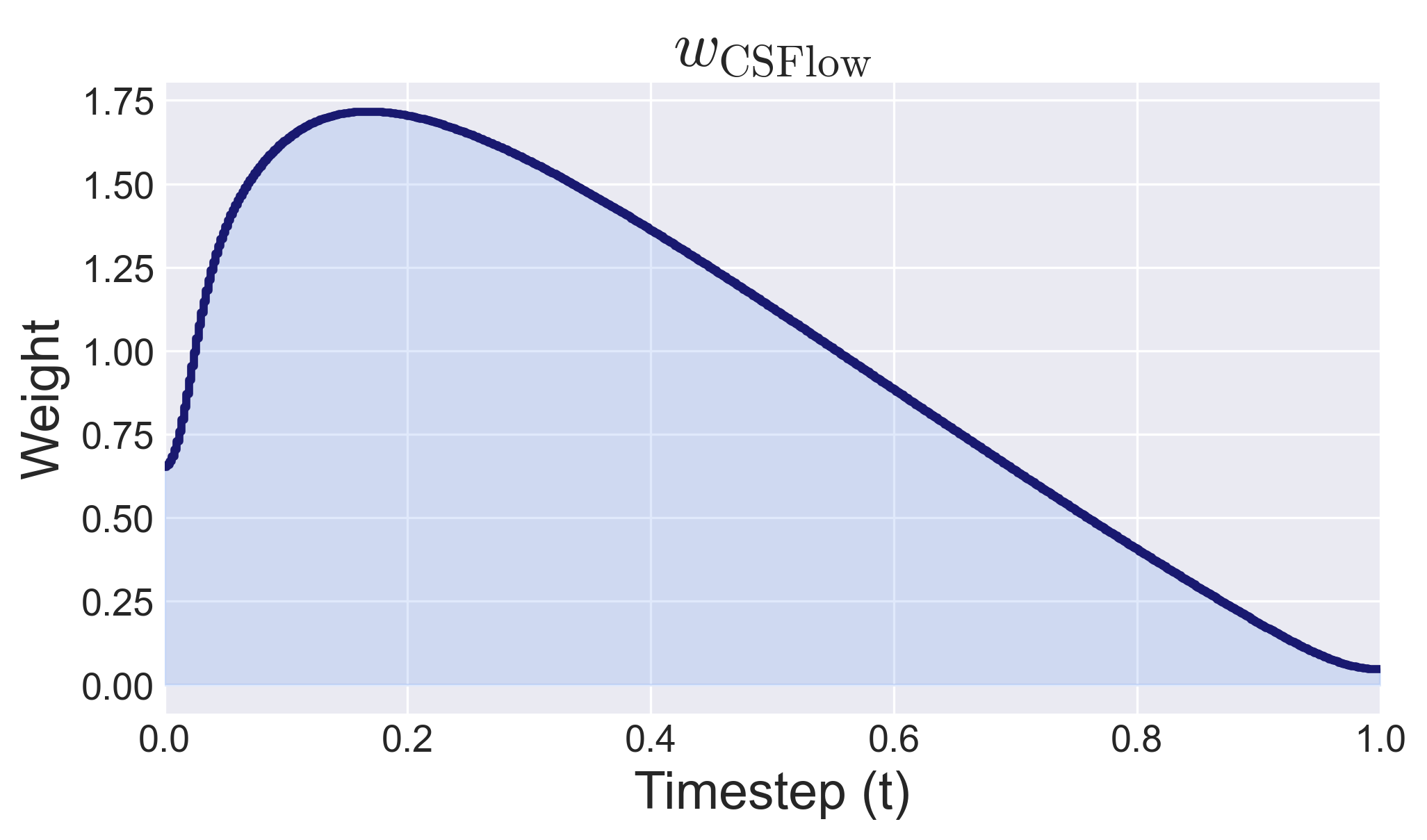}
  \end{minipage}
  \hfill
  \begin{minipage}[b]{0.48\linewidth}
    \centering
    \includegraphics[width=\linewidth]{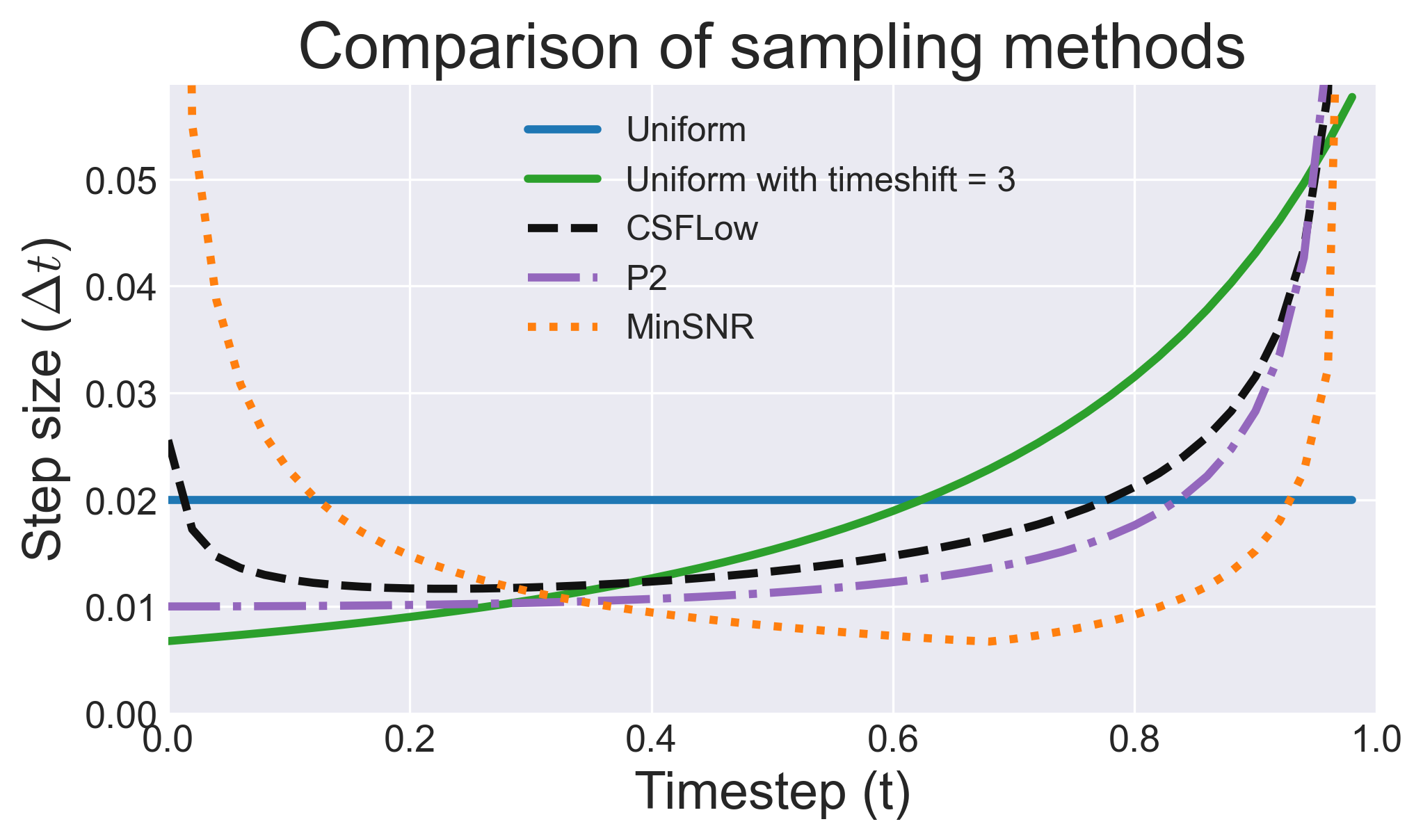}
  \end{minipage}
  \caption{Resulting \(w_{CSFlow}\) (left) and a comparison of sampling methods during inference including our weighted method (right). In the plots the denoising process happens from left to right. Our pure weights strongly bias the model towards early to mid-range generation stages. The step sizes grid resulting from interpolating our weights with uniform ones allocates smaller steps in the same time intervals as described before while increasing step sizes in the late denoising stages.}
  \label{fig:weights_and_steps}
\end{figure}

\subsection{Weights interpolation}
\label{sec:weights_interpolation}
In our experiments we calculate our weights based on the chosen model's training dataset and interpolate between them and the model's base weights according to the interpolation parameter $\alpha$:
\begin{align}\label{interpolation}
    w_{final}(t) = \alpha \cdot w_{CSFlow}(t) + (1 - \alpha)\cdot w_{base}(t)
\end{align}
The base weights correspond to the base method training noise level distribution, or the original inference step weights. We believe that although our weights match the visual importance best, they are designed to \textbf{lose information} from less important frequencies that is still present in the test dataset. By interpolating with base weights, we search for weights that put more emphasis on more important frequencies without losing too much capacity for high frequencies.

In Appendix~\ref{technical-details} we describe the details of the calculation of the weights, and how they can be used for biased training-time noise level sampling, and shifting the timestep grid during inference.

\section{Experiments}
To test the weights, we incorporate them in inference and short finetuning of different flow matching models and compare our setup to its original training/inference. We also compare our weighing influence to the commonly adopted min-SNR-$\gamma$~\citep{minsnrgamma} and similarly motivated P2~\citep{p2} weights. For text-to-image generation, we report results on GenEval \citep{geneval}. For class-to-image generation, we report FID \citep{fid} and Inception Score (IS) \citep{is} metrics.

\subsection{Model assumptions}

Our derivation assumes that the noising and denoising process is defined directly in pixel space. 
This is important because \(r_{\mathrm{signal}}\) relates the noise level at time $t$ to recoverable spatial frequencies in the image. 
In latent-space models, noise is applied to learned latent variables, whose x-y coordinates may not correspond explicitly to pixel-space spatial frequencies. 
Therefore, the connection between denoising time and recoverable image frequencies is not well defined. 
For this reason, we evaluate our method on pixel-space generative models, namely PixelGen-XL/XXL~\cite{pixelgen} and JiT-H~\cite{jit}. In all our experiments, we use the original inference setups (e.g., CFG configuration, number of inference steps) and vary only the timestep weighting in finetuning or inference.

\subsection{Text-to-Image generation}

\begin{table}[t]
  \caption{Effect of our weighting scheme on text-to-image generation ($512 \times 512$). We compare the baseline model with its modified variants. $\alpha_1$ denotes interpolation parameter in finetuning and $\alpha_2$ in inference. Both of our modifications independently improve the baseline model and perform the best applied together.}
  \label{tab:t2i-results}
  \centering
  \small
  \setlength{\tabcolsep}{1.75pt}
  \renewcommand{\arraystretch}{1.05}
  \begin{tabular}{lccccccccc}
    \toprule
    Model Variant & $\alpha_1$ & $\alpha_2$ & Sin.Obj. & Two.Obj & Counting & Colors & Pos & Color.Attr. & Overall$\uparrow$ \\
    \midrule
    \multicolumn{10}{l}{\textbf{PixelGen-XXL/16}} \\
    \quad Baseline (timeshfit 3.0)
      & -- & -- & 0.993 & 0.868 & 0.593 & 0.917 & 0.712 & 0.667 & 0.792 \\
    \quad Baseline (uniform)
      & -- & -- & 0.990 & 0.873 & 0.584 & 0.906 & 0.737 & 0.670 & 0.794 \\
    \quad Baseline finetuning
      & -- & -- & 0.987 & 0.873 & 0.593 & 0.914 & 0.697 & 0.690 & 0.791 \\
    \midrule
    \quad + P2 inference & -- & 0.7 & 0.993 & 0.883 & 0.584 & 0.909 & 0.735 & 0.675 & 0.796 \\
    \quad + P2 finetuning & 1.0 & -- & 0.981 & 0.891 & 0.628 & 0.912 & 0.715 & 0.690 & 0.803 \\
    \quad + P2 finetuning + inference & 1.0 & 1.0 & 0.987 & 0.886 & \textbf{0.640} & 0.906 & 0.735 & 0.682 & 0.806 \\
    \midrule 
    \quad + MinSNR inference & -- & 0.4 & \textbf{1.000} & 0.878 & 0.584 & 0.912 & \textbf{0.740} & 0.670 & 0.797 \\
    \quad + MinSNR finetuning & 1.0 & -- & 0.987 & 0.868 & 0.603 & 0.909 & 0.695 & 0.660 & 0.787 \\
    \quad + MinSNR finetuning + inference & 1.0 & 0.7 & 0.993 & 0.873 & 0.596 & 0.920 & 0.707 & 0.647 & 0.789 \\
    \midrule
    \rowcolor[rgb]{0.90,0.90,0.98}
    \quad + Our inference
      & -- & 1.0 & 0.996 & 0.888 & 0.609 & 0.914 & 0.735 & 0.657 & 0.800 \\
    \rowcolor[rgb]{0.90,0.90,0.98}
    \quad + Our finetuning
      & 1.0 & -- & 0.996 & 0.893 & 0.618 & 0.925 & 0.735 & \textbf{0.692} & 0.810 \\
    \rowcolor[rgb]{0.90,0.90,0.98}
    \quad + Our finetuning + inference
      & 0.8 & 0.7 & 0.990 & \textbf{0.896} & 0.618 & \textbf{0.941} & \textbf{0.740} & 0.685 & \textbf{0.812} \\
    \bottomrule
    \label{tab:t2i-results}
  \end{tabular}
\end{table}

\textbf{Setup.} For text-to-image generation we use PixelGen-XXL \citep{pixelgen} at $512 \times 512$ resolution. This model is pretrained with ImageNet for a total of $280K$ steps and further finetuned with high-quality instruction-tuning dataset BLIP3o-60k~\citep{blip3o} for $40K$ steps. We calculate our weights based on BLIP3o-60k dataset statistics. In inference, we interpolate our weights with baseline model's ones (uniform with time shift 3), and test the resulting non-uniform step sizes. We use Adams-2nd solver with 25 steps. In training, we interpolate our weights with base model's time sampling distribution (logit normal) and sample from the new mixed distribution. We finetune all models for 7500 steps on the BLIP3o-60k dataset and always report the $\alpha$ configuration with the best score, for all methods. We also include versions of unmodified baselines that have been finetuned for the same number of iterations to factor out changes due to finetuning.

\textbf{Quantitative results.} Table \ref{tab:t2i-results} reports the performance of our CSFlow-modifications in text-to-image generation using GenEval benchmark. The PixelGen-XXL/16 inference-weighted model with interpolation parameter \(\alpha_2 = 1.0\) (therefore, using step sizes calculated explicitly from our weights) achieves GenEval score of 0.80 surpassing its unmodified version. Finetuning the model with our weighted time sampling method with \(\alpha_1=1.0\) (or \(\alpha_1=0.8\) with \(\alpha_2=0.7\)) further improves the model, achieving GenEval score of 0.81.

\begin{figure}[t]
    \centering
    \includegraphics[width=1.0\linewidth]{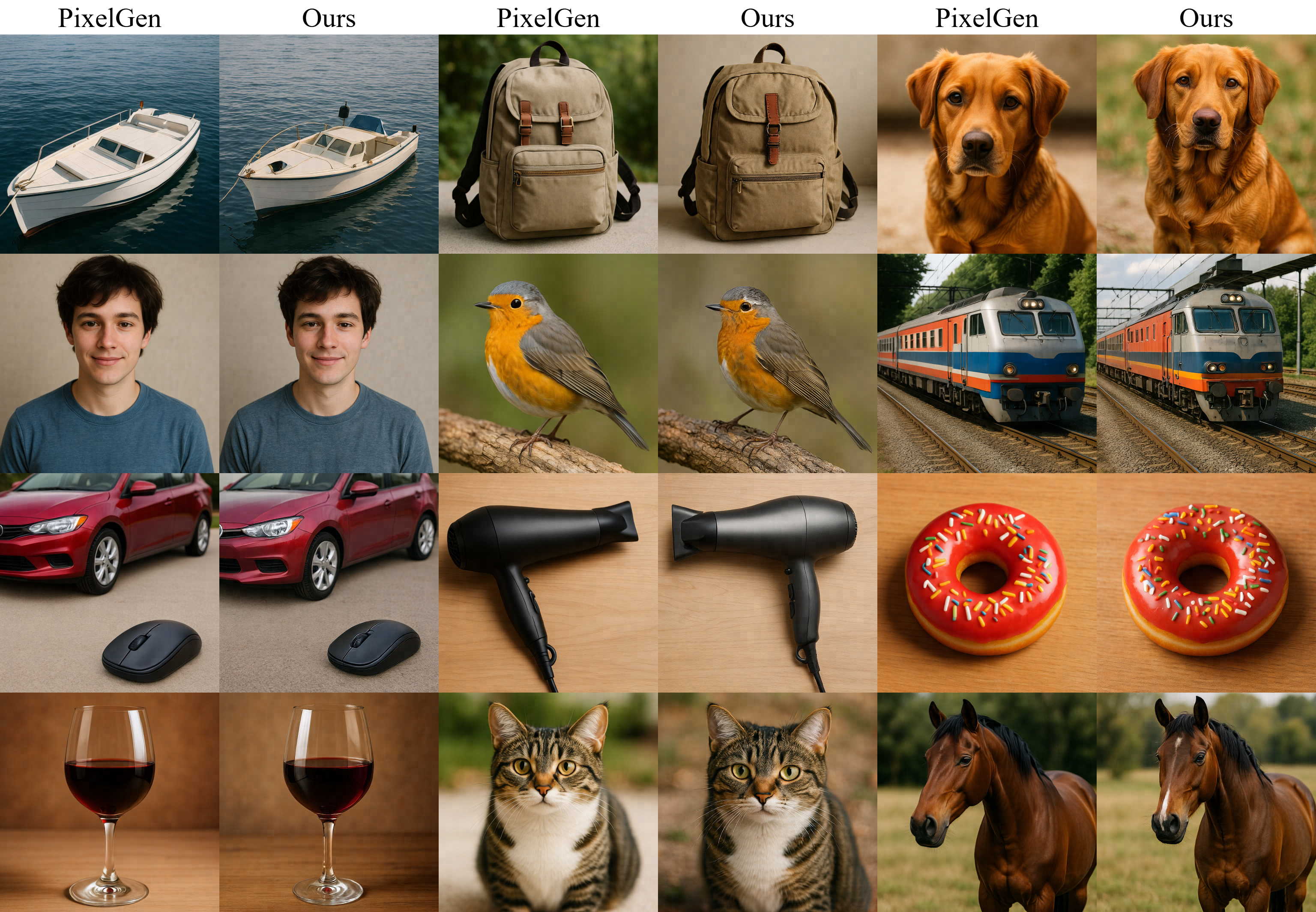}
    \caption{Comparison of the base PixelGen-XXL/16 model (left) with our weighted version (with \(\alpha_1 = 0.8\) and \(\alpha_2 = 0.7\)) (right) in text-to-image generation. All images are 512 \(\times\) 512 resolution. 
    We observe that with our weights, samples often yield much less cartoonish-looking generations.
    Best viewed zoomed in. More results can be found in the appendix.
    \vspace{-0.5cm}}
    \label{fig:t2i-qualitative-results}
    
\end{figure}

\begin{wraptable}{r}{0.5\textwidth}
\vspace{-0.4cm}
  \caption{Effect of our weights on class-to-image generation models (ImageNet $256 \times 256$ with CFG). Each block compares the baseline model
  with its modified variants. We show the best 
  $\alpha_2$ for each method.}
  \label{tab:c2i-results}
  \centering
  \small
  \setlength{\tabcolsep}{6pt}
  \renewcommand{\arraystretch}{1.05}
  \begin{tabular}{lccc}
    \toprule
    Model & $\alpha_2$ & FID$\downarrow$ & IS$\uparrow$ \\
    \midrule

    \multicolumn{4}{l}{\textbf{JiT-H/16}} \\
    \quad Baseline 
        & -- & 1.88 & 297.0 \\
    \quad + P2 inference  & 0.3 & 1.86 & 295.8 \\
    \quad + MinSNR inference  & 0.3 & 1.83 & \textbf{301.4} \\
    \rowcolor[rgb]{0.90,0.90,0.98}
    \quad + Our inference
      & 0.4 & \textbf{1.79} & 297.5 \\

    \midrule

    \multicolumn{4}{l}{\textbf{PixelGen-XL/16}} \\
    \quad Baseline (timeshift 2.0)
       & -- & 1.89 & 292.2 \\
    \quad Baseline (uniform)
       & -- & 1.94 & 295.8 \\
    \quad + P2 inference  & 0.3 & 1.89 & 297.3 \\
    \quad + MinSNR inference  & 0.2 & 1.89 & 296.9 \\
    \rowcolor[rgb]{0.90,0.90,0.98}
    \quad + Our inference 
       & 0.5 & \textbf{1.87} & \textbf{303.6} \\
    \bottomrule
  \end{tabular}
\vspace{-0.8cm}
\end{wraptable}

\textbf{Qualitative results.} Figure \ref{fig:t2i-qualitative-results} shows chosen image pairs that compare the generations of the baseline model and our best configuration model. We observe that our model generates \textbf{more realistic} images in terms of their textures: while the baseline model suffers from oversmoothing objects and surfaces, we find that our weighting in finetuning and inference fixes that issue, making the images look less artificial and cartoonish.

\subsection{Class-to-Image generation}
\textbf{Setup.} For class conditional generation, we use two models: JiT-H \citep{jit} and PixelGen-XL \citep{pixelgen}, which satisfy our assumptions and perform well among pixel-space models. They are both trained on ImageNet \(256 \times 256\) \citep{imagenet1} \citep{imagenet2} for 600 (JiT) and 160 epochs (PixelGen). We use the same dataset for analyzing the average data frequency content and use it to calculate our weights. JiT is a relatively straightforward diffusion model that predicts the clean image $\mathbf{x}_{clean}$ itself and does not use any tokenizer, pre-training or extra losses. PixelGen extends the JiT model by adding LPIPS~\citep{lpips} 
and an equivalent loss with a DINOv2~\citep{dino} backbone.

In inference, we interpolate our weights with uniform ones and test the resulting non-uniform step sizes. We use Heun sampler with 50 steps for all setups.

\begin{wrapfigure}{r}{0.58\textwidth}
\vspace{-0.3cm}
    \centering
    \includegraphics[width=1\linewidth]{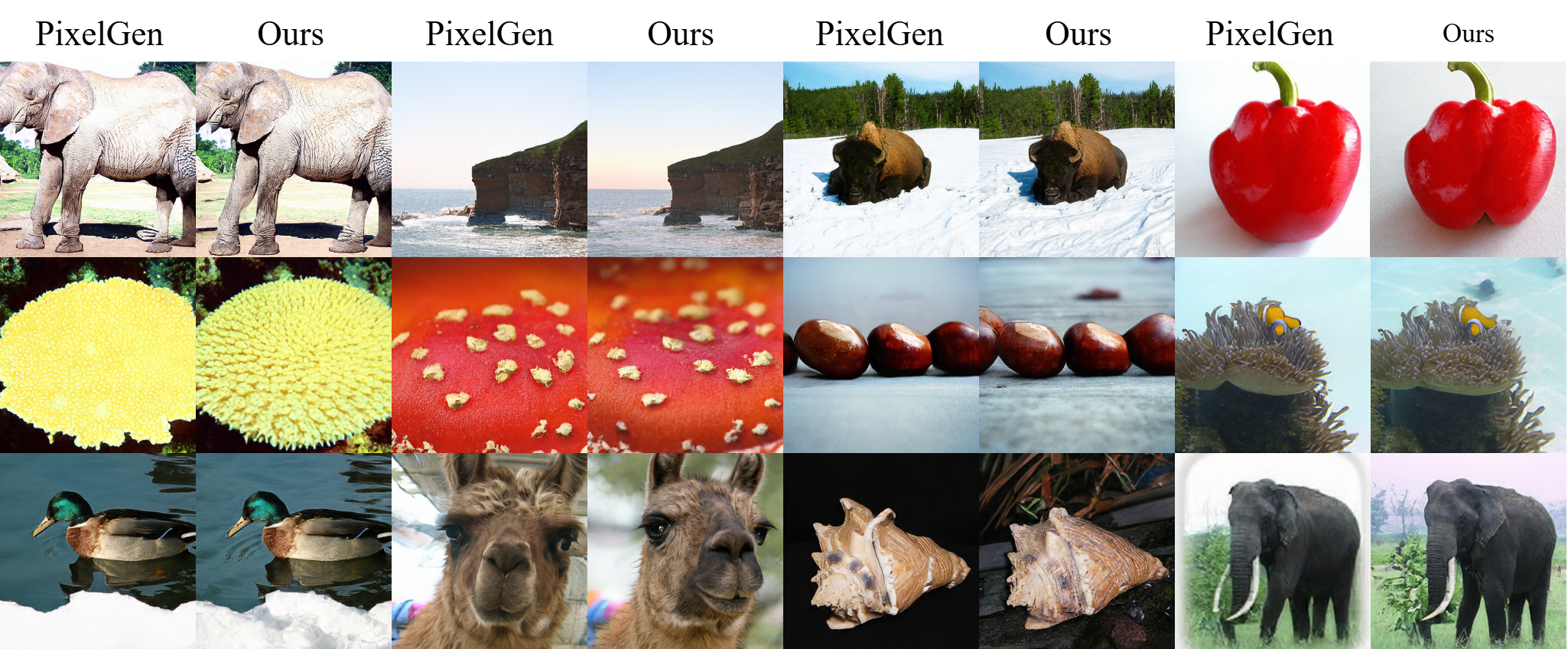}
    \caption{Comparison of the base PixelGen-XL/16 model (left) with our weighted version with \(\alpha_2 = 0.5\)) (right) in class-to-image generation. All images are 256 \(\times\) 256 resolution. While findings are generally the same, the differences are less noticeable in lower quality images.}
    \label{fig:c2i-qualitative-results}
\vspace{-0.3cm}
\end{wrapfigure}

\textbf{Quantitative results.} Table \ref{tab:c2i-results} reports the performance of our weighted-modifications in class-to-image generation using FID and IS metrics. The JiT-H/16 inference-weighted model with interpolation parameter \(\alpha_2 = 0.4\) achieves an FID score of 1.79 and IS score of 297.5, improving the original model scores (FID 1.88, IS 297.0). By \textbf{only introducing our weights in inference}, we surpass the baseline model, as well as its largest version JiT-G/16 with over 1B more parameters (FID 1.82, IS 292.6). The PixelGen-XL/16 inference-weighted model with interpolation parameter \(\alpha_2=0.5\) achieves the FID score of 1.87 and IS score of 303.6 and improves the baseline model as well (FID 1.89, IS 292.2).

\textbf{Qualitative results.} On ImageNet generation, we observe similar effects as in high resolution text-to-image generation: more realistic textures and better color tones and contrast. However, due to the general low quality of generated images, the effect is less significant. In Figure \ref{fig:c2i-qualitative-results} we present pair comparisons between PixelGen model and our modified version of it that reflect our observations.

\subsection{Analysis}

We further analyze the GenEval scores of different methods over different interpolation states between original and proposed weights in Fig.~\ref{fig:geneval-alpha-sweep}. We can see that, in contrast to other weighting schemes, CSFlow weights tend to improve scores when approaching $\alpha_2=1.0$. Fig.~\ref{fig:visual_alpha_sweep} shows a visual analysis of two examples when varying $\alpha_1=\alpha_2$. We observe that the realism gradually increases over the trajectory.
\begin{figure}[t]
  \centering
  \begin{minipage}[b]{0.48\linewidth}
    \centering
    \includegraphics[width=\linewidth]{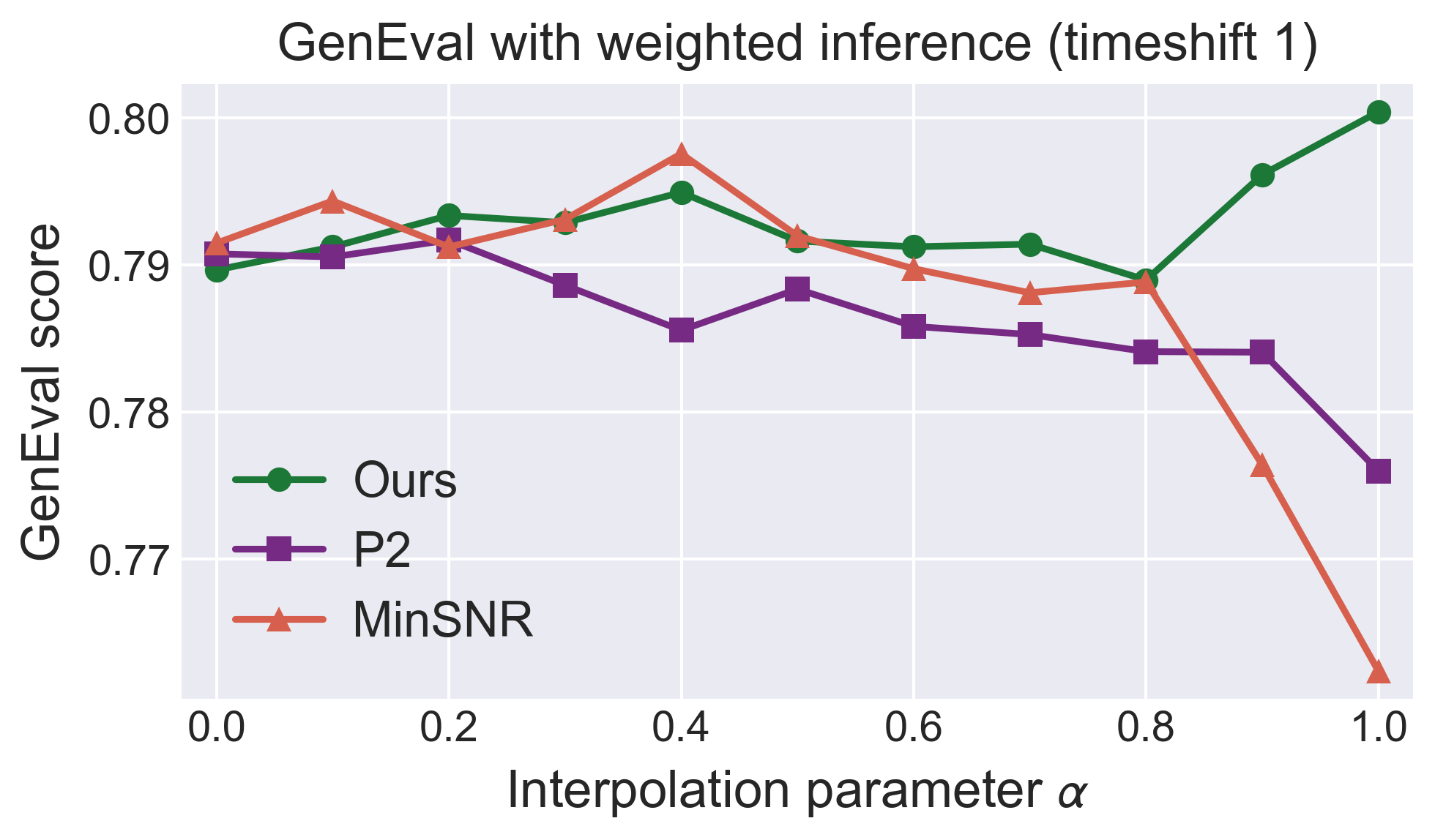}
  \end{minipage}
  \hfill
  \begin{minipage}[b]{0.48\linewidth}
    \centering
    \includegraphics[width=\linewidth]{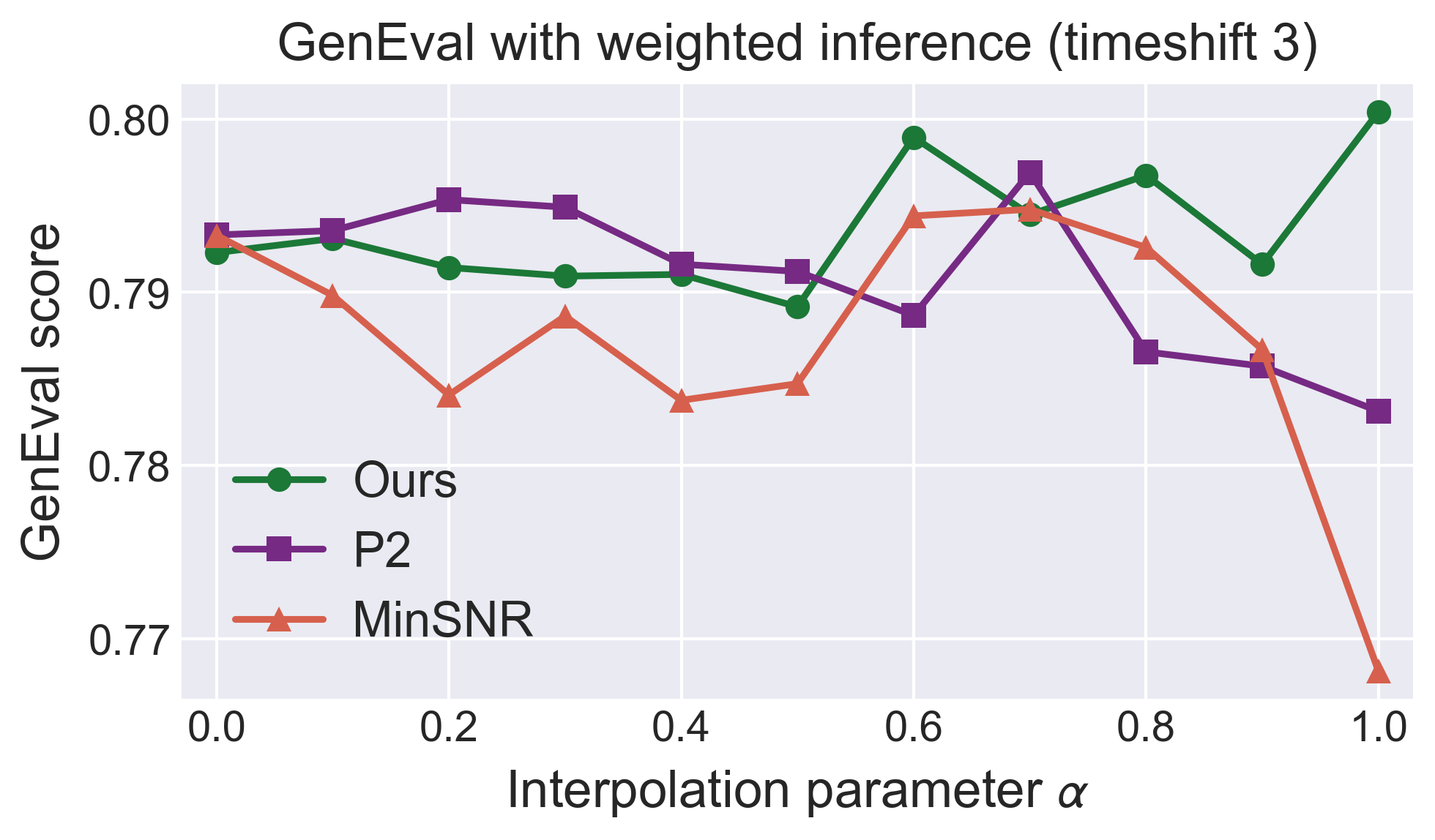}
  \end{minipage}

  \caption{Change in GenEval score across interpolation parameter \(\alpha\) in inference for all three perception-driven methods. We interpolate with uniform (left) and shifted uniform weights (right). Our method surpasses other two when using weights directly (for \(\alpha = 1.0\)).}
  \label{fig:geneval-alpha-sweep}
\end{figure}
\begin{figure}[t]
    \centering
    \includegraphics[width=1\linewidth]{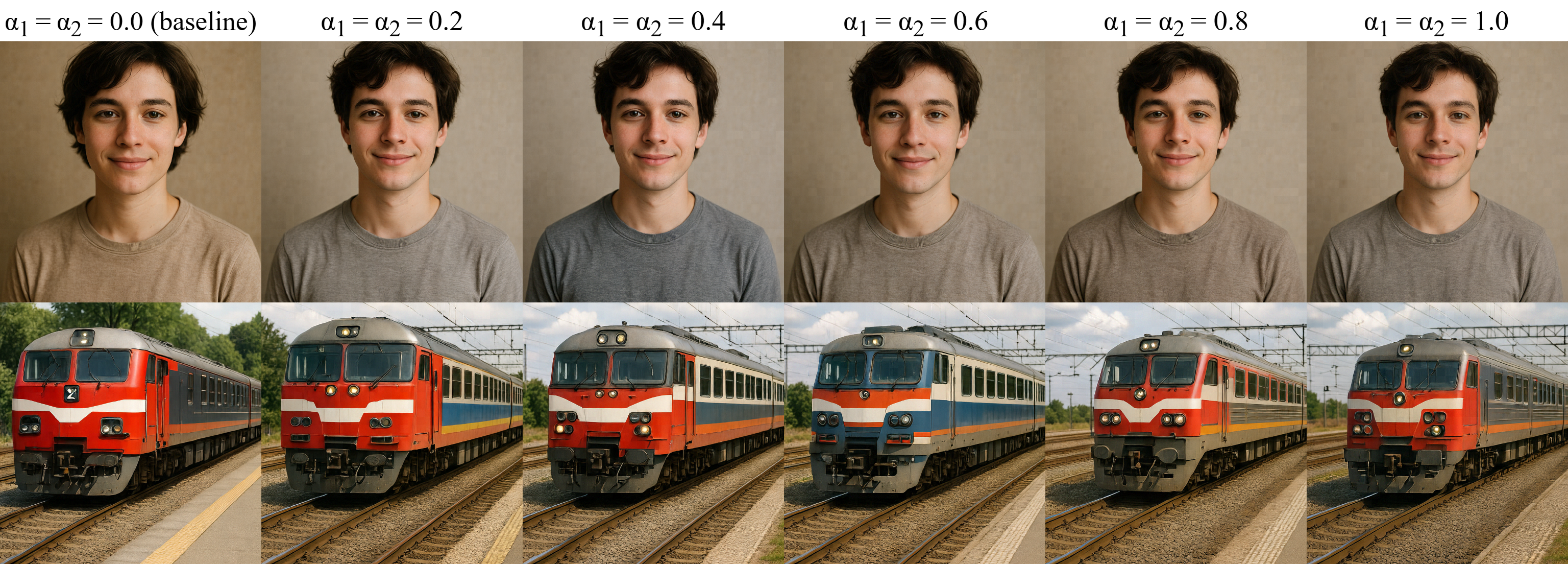}
    \caption{Change in generated images for increasing $\alpha$, interpolating from uniform weights $\alpha_1=\alpha_2=0.0$ to our perception-driven weights $\alpha_1=\alpha_2=1.0$ for two fixed seeds. We observe that images become visually more realistic when increasing $\alpha$. Best viewed zoomed in.}
    \label{fig:visual_alpha_sweep}
\end{figure}

\section{Conclusion}
We introduced Contrast Sensitive Flow (CSFlow), a procedure to weigh the information gain of individual steps in flow matching by human contrast sensitivity. We are the first to explicitly describe which frequencies are \textit{being generated} at specific $t$'s. In experiments on text-to-image and class-to-image generation tasks, we were able to verify that the introduction of such weights leads to quantitative improvements in training-free and finetuning settings. Qualitatively, images generated with the derived weights show higher visual realism.

\paragraph{Limitations and Future Work.}
While CSFlow improves the visual realism of generated images, we find that it does not impact the number of large geometric errors (e.g., missing legs), which are sometimes made by the base models. This suggests that the reasons for these types of errors are different from suboptimal weighting of timesteps. A conceptual limitation of our weighting derivation is the required assumption of a typical screen and screen distance, leading to the necessity of calibration if fully optimal weights are desired. A promising venture for the future would be to analyze how image frequencies translate into common VAE latent spaces to transfer appropriate weighting schemes to latent image generation. 

\section*{Acknowledgments}
Jan Eric Lenssen and Bart Pogodzinski are supported by the German Research Foundation (DFG) - 556415750 (Emmy Noether Programme, project: Spatial Modeling and Reasoning).

\bibliography{references}

@article{jit,
  author  = {Li, Tianhong and He, Kaiming},
  title   = {Back to Basics: Let Denoising Generative Models Denoise},
  journal = {arXiv preprint arXiv:2511.13720},
  year    = {2025}
}

@article{pixelgen,
  author  = {Ma, Zehong and Xu, Ruihan and Zhang, Shiliang},
  title   = {{PixelGen}: Pixel Diffusion Beats Latent Diffusion with Perceptual Loss},
  journal = {arXiv preprint arXiv:2602.02493},
  year    = {2026}
}

@article{ediffi,
  author  = {Balaji, Yogesh and Nah, Seungjun and Huang, Xun and Vahdat, Arash and Song, Jiaming and Zhang, Qinsheng and Kreis, Karsten and Aittala, Miika and Aila, Timo and Laine, Samuli and Catanzaro, Bryan and Karras, Tero and Liu, Ming-Yu},
  title   = {{eDiff-I}: Text-to-Image Diffusion Models with an Ensemble of Expert Denoisers},
  journal = {arXiv preprint arXiv:2211.01324},
  year    = {2022}
}

@inproceedings{minsnrgamma,
  author    = {Hang, Tiankai and Gu, Shuyang and Li, Chen and Bao, Jianmin and Chen, Dong and Hu, Han and Geng, Xin and Guo, Baining},
  title     = {Efficient Diffusion Training via {Min-SNR} Weighting Strategy},
  booktitle = {Proceedings of the IEEE/CVF International Conference on Computer Vision (ICCV)},
  pages     = {7441--7451},
  year      = {2023}
}

@inproceedings{imagenet1,
  author    = {Deng, Jia and Dong, Wei and Socher, Richard and Li, Li-Jia and Li, Kai and Fei-Fei, Li},
  title     = {{ImageNet}: A Large-Scale Hierarchical Image Database},
  booktitle = {2009 IEEE Conference on Computer Vision and Pattern Recognition},
  pages     = {248--255},
  year      = {2009},
  doi       = {10.1109/CVPR.2009.5206848}
}

@article{imagenet2,
Author = {Olga Russakovsky and Jia Deng and Hao Su and Jonathan Krause and Sanjeev Satheesh and Sean Ma and Zhiheng Huang and Andrej Karpathy and Aditya Khosla and Michael Bernstein and Alexander C. Berg and Li Fei-Fei},
Title = {{ImageNet Large Scale Visual Recognition Challenge}},
Year = {2015},
journal   = {International Journal of Computer Vision (IJCV)},
doi = {10.1007/s11263-015-0816-y},
volume={115},
number={3},
pages={211-252}
}

@inproceedings{geneval,
  author    = {Ghosh, Dhruba and Hajishirzi, Hannaneh and Schmidt, Ludwig},
  title     = {{GenEval}: An Object-Focused Framework for Evaluating Text-to-Image Alignment},
  booktitle = {Advances in Neural Information Processing Systems},
  volume    = {36},
  note      = {Datasets and Benchmarks Track},
  year      = {2023}
}

@inproceedings{lpips,
  author    = {Zhang, Richard and Isola, Phillip and Efros, Alexei A. and Shechtman, Eli and Wang, Oliver},
  title     = {The Unreasonable Effectiveness of Deep Features as a Perceptual Metric},
  booktitle = {Proceedings of the IEEE Conference on Computer Vision and Pattern Recognition (CVPR)},
  pages     = {586--595},
  year      = {2018}
}

@article{dino,
  author  = {Oquab, Maxime and Darcet, Timoth{\'e}e and Moutakanni, Th{\'e}o and Vo, Huy and Szafraniec, Marc and Khalidov, Vasil and Fernandez, Pierre and Haziza, Daniel and Massa, Francisco and El-Nouby, Alaaeldin and Assran, Mahmoud and Ballas, Nicolas and Galuba, Wojciech and Howes, Russell and Huang, Po-Yao and Li, Shang-Wen and Misra, Ishan and Rabbat, Michael and Sharma, Vasu and Synnaeve, Gabriel and Xu, Hu and J{\'e}gou, Herv{\'e} and Mairal, Julien and Labatut, Patrick and Joulin, Armand and Bojanowski, Piotr},
  title   = {{DINOv2}: Learning Robust Visual Features without Supervision},
  journal = {Transactions on Machine Learning Research},
  year    = {2024}
}

@article{blip3o,
  author  = {Chen, Jiuhai and Xu, Zhiyang and Pan, Xichen and Hu, Yushi and Qin, Can and Goldstein, Tom and Huang, Lifu and Zhou, Tianyi and Xie, Saining and Savarese, Silvio and Xue, Le and Xiong, Caiming and Xu, Ran},
  title   = {{BLIP3-o}: A Family of Fully Open Unified Multimodal Models---Architecture, Training and Dataset},
  journal = {arXiv preprint arXiv:2505.09568},
  year    = {2025}
}

@inproceedings{fid,
  author    = {Heusel, Martin and Ramsauer, Hubert and Unterthiner, Thomas and Nessler, Bernhard and Hochreiter, Sepp},
  title     = {{GANs} Trained by a Two Time-Scale Update Rule Converge to a Local {Nash} Equilibrium},
  booktitle = {Advances in Neural Information Processing Systems},
  volume    = {30},
  pages     = {6626--6637},
  year      = {2017}
}

@inproceedings{is,
  author    = {Salimans, Tim and Goodfellow, Ian and Zaremba, Wojciech and Cheung, Vicki and Radford, Alec and Chen, Xi},
  title     = {Improved Techniques for Training {GANs}},
  booktitle = {Advances in Neural Information Processing Systems},
  volume    = {29},
  pages     = {2226--2234},
  year      = {2016}
}

@inproceedings{sd3,
  author    = {Esser, Patrick and Kulal, Sumith and Blattmann, Andreas and Entezari, Rahim and M{\"u}ller, Jonas and Saini, Harry and Levi, Yam and Lorenz, Dominik and Sauer, Axel and Boesel, Frederic and Podell, Dustin and Dockhorn, Tim and English, Zion and Rombach, Robin},
  title     = {Scaling Rectified Flow Transformers for High-Resolution Image Synthesis},
  booktitle = {Proceedings of the 41st International Conference on Machine Learning},
  series    = {Proceedings of Machine Learning Research},
  volume    = {235},
  pages     = {12606--12633},
  publisher = {PMLR},
  year      = {2024}
}

@article{rapsd,
  author  = {Ruzanski, Evan and Chandrasekar, V.},
  title   = {Scale Filtering for Improved Nowcasting Performance in a High-Resolution {X}-Band Radar Network},
  journal = {IEEE Transactions on Geoscience and Remote Sensing},
  volume  = {49},
  number  = {6},
  pages   = {2296--2307},
  year    = {2011},
  doi     = {10.1109/TGRS.2010.2103946}
}

@inproceedings{csf,
  author    = {Barten, Peter},
  title     = {Formula for the Contrast Sensitivity of the Human Eye},
  booktitle = {Image Quality and System Performance},
  series    = {Proceedings of SPIE},
  volume    = {5294},
  pages     = {231--238},
  year      = {2003},
  doi       = {10.1117/12.537476}
}

@misc{spectral,
  author = {Dieleman, Sander},
  title  = {Diffusion Is Spectral Autoregression},
  year   = {2024},
  note   = {Blog post},
  url    = {https://sander.ai/2024/09/02/spectral-autoregression.html}
}

@article{fourier,
  author  = {Falck, Fabian and Pandeva, Teodora and Zahirnia, Kiarash and Lawrence, Rachel and Turner, Richard and Meeds, Edward and Zazo, Javier and Karmalkar, Sushrut},
  title   = {A Fourier Space Perspective on Diffusion Models},
  journal = {arXiv preprint arXiv:2505.11278},
  year    = {2025}
}

@inproceedings{sid,
  author    = {Hoogeboom, Emiel and Heek, Jonathan and Salimans, Tim},
  title     = {Simple Diffusion: End-to-End Diffusion for High Resolution Images},
  booktitle = {Proceedings of the 40th International Conference on Machine Learning},
  series    = {Proceedings of Machine Learning Research},
  volume    = {202},
  pages     = {13213--13232},
  publisher = {PMLR},
  year      = {2023}
}

@inproceedings{diffusion1,
  author    = {Sohl-Dickstein, Jascha and Weiss, Eric A. and Maheswaranathan, Niru and Ganguli, Surya},
  title     = {Deep Unsupervised Learning Using Nonequilibrium Thermodynamics},
  booktitle = {Proceedings of the 32nd International Conference on Machine Learning},
  series    = {Proceedings of Machine Learning Research},
  volume    = {37},
  pages     = {2256--2265},
  publisher = {PMLR},
  year      = {2015}
}

@inproceedings{ddpm,
  author    = {Ho, Jonathan and Jain, Ajay N. and Abbeel, Pieter},
  title     = {Denoising Diffusion Probabilistic Models},
  booktitle = {Advances in Neural Information Processing Systems},
  volume    = {33},
  pages     = {6840--6851},
  year      = {2020}
}

@inproceedings{iddpm,
  author    = {Nichol, Alex and Dhariwal, Prafulla},
  title     = {Improved Denoising Diffusion Probabilistic Models},
  booktitle = {Proceedings of the 38th International Conference on Machine Learning},
  series    = {Proceedings of Machine Learning Research},
  volume    = {139},
  pages     = {8162--8171},
  publisher = {PMLR},
  year      = {2021}
}

@inproceedings{lipman2022flow,
  title = {Flow Matching for Generative Modeling},
  author = {Yaron Lipman and Ricky T. Q. Chen and Heli Ben-Hamu and Maximilian Nickel and Matthew Le},
  booktitle = {International Conference on Learning Representations {(ICLR)}},
  year = {2023}
}

@inproceedings{liu2022flow,
  title = {Flow straight and fast: Learning to generate and transfer data with rectified flow},
  author = {Liu, Xingchao and Gong, Chengyue and Liu, Qiang},
  booktitle = {International Conference on Learning Representations {(ICLR)}},
  year = {2023}
}

@inproceedings{edm,
  author    = {Karras, Tero and Aittala, Miika and Aila, Timo and Laine, Samuli},
  title     = {Elucidating the Design Space of Diffusion-Based Generative Models},
  booktitle = {Advances in Neural Information Processing Systems},
  volume    = {35},
  pages     = {26565--26577},
  year      = {2022}
}

@inproceedings{ye2025schedule,
  author    = {Ye, Zilyu and Chen, Zhiyang and Li, Tiancheng and Huang, Zemin and Luo, Weijian and Qi, Guo-Jun},
  title     = {Schedule On the Fly: Diffusion Time Prediction for Faster and Better Image Generation},
  booktitle = {Proceedings of the IEEE/CVF Conference on Computer Vision and Pattern Recognition (CVPR)},
  year      = {2025}
}

@inproceedings{p2,
  author    = {Choi, Jooyoung and Lee, Jungbeom and Shin, Chaehun and Kim, Sungwon and Kim, Hyunwoo and Yoon, Sungroh},
  title     = {Perception Prioritized Training of Diffusion Models},
  booktitle = {Proceedings of the IEEE/CVF Conference on Computer Vision and Pattern Recognition (CVPR)},
  pages     = {11472--11481},
  year      = {2022}
}


\appendix
\newpage
\section*{Broader Impact}
Our work presents a method that improves the realism of generated images. Methods for realistic image generation have the potential to be misused; for example, for the creation of deepfake images. Our contribution is of conceptual nature with proof of concept experiments. We do not provide usable software that goes beyond the capabilities of state-of-the-art image generators from industry.

\section{Contrast Sensitivity Function (CSF)}\label{csf-details}
CSF is defined for spatial frequencies measured in cycles per degree of visual angle. We use Barten CSF formula and parameters given below.
\begin{equation} \label{csf_eq}
    \text{CSF}(f) =
    \frac{M_{\mathrm{opt}}(f)}{k}
    \left[
    \frac{2}{T}
    \left(
    \frac{1}{X_{0}^{2}} + \frac{1}{X_{\max}^{2}} + \frac{f^{2}}{N_{\max}^{2}}
    \right)
    \left(
    \frac{1}{n p E} +
    \frac{\phi_{0}}{1 - \exp\!\left(-\left(\frac{f}{u_{0}}\right)^{2}\right)}
    \right)
    \right]^{-1/2}
\end{equation}
\begin{equation}
    M_{\mathrm{opt}}(f) = \exp\!\left(-2\pi^{2}\sigma^{2}f^{2}\right)
\end{equation}

\begin{align}
\sigma &= \frac{0.5}{60} , &
k &= 3.0 , &
T &= 0.1 , \\
X_0 &= 60 , &
X_{\max} &= 12 , &
N_{\max} &= 15 , \\
n &= 0.03 , &
p &= 1.2\times 10^{6} , &
E &= 500 , \\
\phi_0 &= 3\times 10^{-8} , &
u_0 &= 7
\end{align}

In the digital images spatial frequencies are defined in the cycles per pixel domain and depend on the physical size of a pixel and viewing distance to the screen. In all usages of CSF we assume the user is looking at the generated images on a regular size laptop in a viewing distance of $50 \ cm$.

\begin{figure}[h]
  \centering
  \begin{minipage}[b]{0.48\linewidth}
    \centering
    \includegraphics[width=\linewidth]{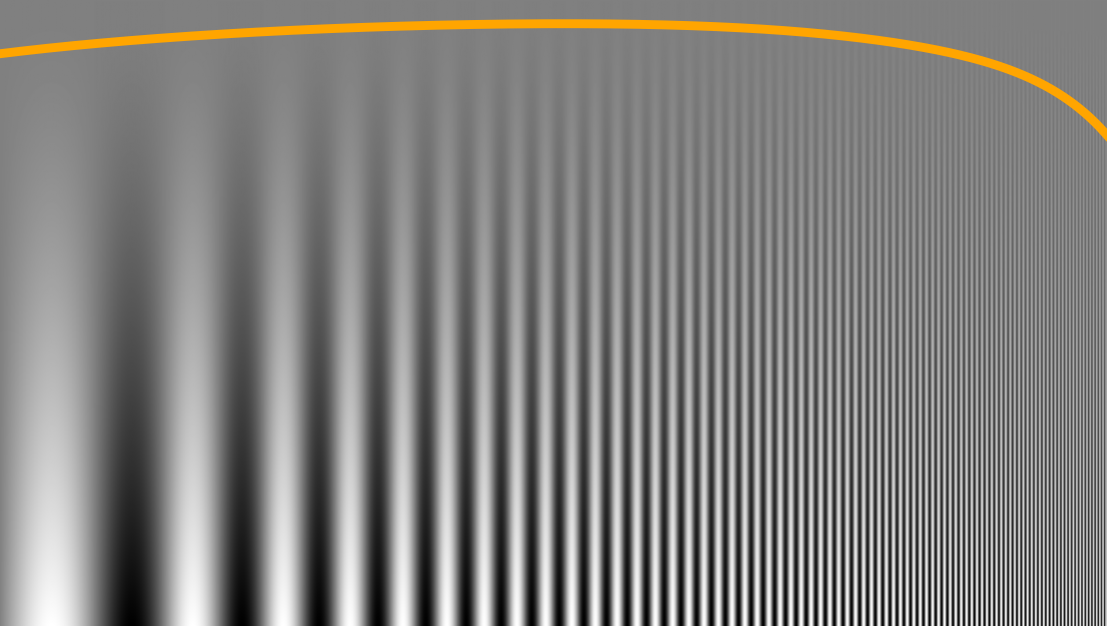}
  \end{minipage}
  \hfill
  \begin{minipage}[b]{0.48\linewidth}
    \centering
    \includegraphics[width=\linewidth]{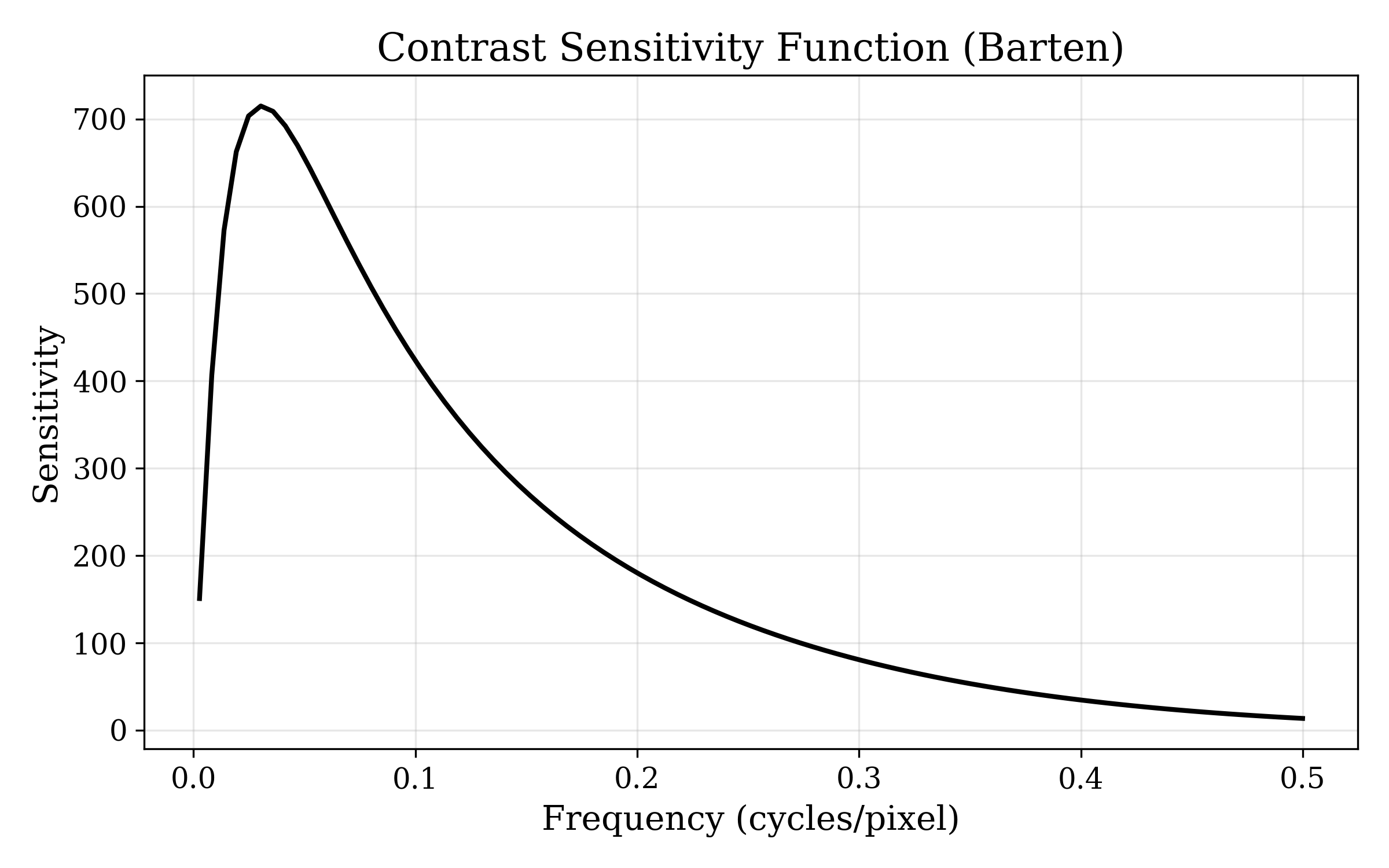}
  \end{minipage}
  \caption{Visualization of CSF (left) and CSF values for spatial frequencies in cycles/pixel domain (right). We plot CSF in log-log space on top of spatial frequencies to visualize the idea of CSF. We assume viewing distance of $40\ cm$ and physical size of the plot $7 \ cm$ (\(\approx \frac{1}{3}\) of an A4 page).}
  \label{fig:csf}
\end{figure}

\section{Proof of Theorem~\ref{thm:retained-signal}}
\label{apn:proof-retained-signal}

\begin{proof}
By linearity of the DFT,
\begin{align}
    F_f(\mathbf{x}_t)
    =
    a_tF_f(\mathbf{x}_{clean})
    +
    b_tF_f(\boldsymbol{\epsilon}).
\end{align}
Therefore,

\begin{align}
    \mathbb{E}[P_f(\mathbf{x}_t)]
    &=
    \mathbb{E}\left[
    \left|
        a_tF_f(\mathbf{x}_{clean})
        +
        b_tF_f(\boldsymbol{\epsilon})
    \right|^2
    \right] \\
    &=
    a_t^2
    \mathbb{E}\left[
        |F_f(\mathbf{x}_{clean})|^2
    \right]
    +
    b_t^2
    \mathbb{E}\left[
        |F_f(\boldsymbol{\epsilon})|^2
    \right] \\
    &\quad+
    2a_tb_t
    \operatorname{Re}
    \mathbb{E}\left[
        F_f(\mathbf{x}_{clean})
        \overline{F_f(\boldsymbol{\epsilon})}
    \right].
\end{align}

Since \(\boldsymbol{\epsilon}\) is independent of \(\mathbf{x}_{clean}\) and zero-mean, the cross term vanishes. 
By the flat unit expected power assumption (true for $\epsilon \sim \mathcal{N}(0,1)$),
\begin{align}
    \mathbb{E}\left[
        |F_f(\boldsymbol{\epsilon})|^2
    \right] = 1.
\end{align}
Thus,
\begin{align}
    \mathbb{E}[P_f(\mathbf{x}_t)]
    =
    a_t^2S_f + b_t^2.
\end{align}
Furthermore,
\begin{align}
    \mathbb{E}\left[P_f(a_t\mathbf{x}_{clean})\right]
    =
    a_t^2S_f.
\end{align}
Hence,
\begin{align}
    \frac{
        \mathbb{E}\left[P_f(a_t\mathbf{x}_{clean})\right]
    }{
        \mathbb{E}\left[P_f(\mathbf{x}_t)\right]
    }
    =
    \frac{
        a_t^2S_f
    }{
        a_t^2S_f + b_t^2
    }.
\end{align}
\end{proof}

\section{Technical details}
\label{technical-details}

\subsection{Estimating the data power spectrum}
\label{rapsd-details}

The retained signal \(r_{\mathrm{signal}}(f,t)\) requires the expected image power spectrum
\begin{align}
    S_f = \mathbb{E}\left[|F_f(\mathbf{x}_{clean})|^2\right].
\end{align}
    
We estimate \(S_f\) from the model's training dataset using the Radially Averaged Power Spectral Density (RAPSD)~\cite{rapsd}. 
We use RAPSD because our weighting depends only on the spatial frequency magnitude, not on the orientation of the corresponding Fourier component.

For an image \(\mathbf{x}_{clean}\), let
\begin{align}
    P_{\mathbf{x}_{clean}}(u,v)
    =
    |F_{\mathbf{x}_{clean}}(u,v)|^2
\end{align}
    
denote the two-dimensional DFT power at frequency coordinate \((u,v)\). 
For a radial frequency bin \(f\), let \(\mathcal{A}_f\) be the set of Fourier coordinates whose distance from the origin falls into that bin. 
The RAPSD is
\begin{align}
    \mathrm{RAPSD}_{\mathbf{x}_{clean}}(f)
    =
    \frac{1}{|\mathcal{A}_f|}
    \sum_{(u,v)\in \mathcal{A}_f}
    P_{\mathbf{x}_{clean}}(u,v).
\end{align}
We then estimate the expected image power as
\begin{align}
    S_f
    =
    \frac{1}{|D|}
    \sum_{\mathbf{x}_{clean}\in D}
    \mathrm{RAPSD}_{\mathbf{x}_{clean}}(f).
\end{align}
Using this estimate, the retained signal is computed as
\begin{align}
    r_{\mathrm{signal}}(f,t)
    =
    \frac{
        a_t^2 S_f
    }{
        a_t^2 S_f + b_t^2
    }.
\end{align}

\subsection{Computing CSF-based weights}
\label{app:computing-csflow-weights}

In the main text, we define the CSF-based weights in continuous time using the instantaneous increase
\[
    \partial_t r_{\mathrm{signal}}(f,t).
\]
The raw perceptual score at time \(t\) is
\begin{equation}
\label{eq:app_raw_csf_score}
    \widetilde{w}_{\mathrm{CSFlow}}(t)
    =
    \frac{
        \int_{\Omega}
        \partial_t r_{\mathrm{signal}}(f,t) \cdot CSF(f) \, df
    }{
        \int_{\Omega}
        \partial_t r_{\mathrm{signal}}(f,t) \, df
    },
\end{equation}
where \(\Omega\) denotes the set of radial frequency bins. 
This quantity is the average CSF value over the frequencies that become newly recoverable around time \(t\).

We normalize the raw scores to have mean one over the denoising trajectory:
\begin{equation}
\label{eq:app_mean_one_weight}
    w_{\mathrm{CSFlow}}(t)
    =
    \frac{
        \widetilde{w}_{\mathrm{CSFlow}}(t)
    }{
        \int_0^1
        \widetilde{w}_{\mathrm{CSFlow}}(\tau)
        \, d\tau
    }.
\end{equation}
Since the denoising interval has length one, this gives
\[
    \int_0^1 w_{\mathrm{CSFlow}}(t)\,dt = 1,
\]
so the average weight over time is unchanged. In practice, we compute interval-level weights on a timestep grid
\[
    0=t_0 < t_1 < \dots < t_T=1,
    \qquad
    \Delta t_i = t_{i+1}-t_i.
\]
For each interval, we approximate the integral of \(\partial_t r_{\mathrm{signal}}\) using the finite difference
\begin{equation}
    \Delta r_{\mathrm{signal}}(f,t_i,\Delta t_i)
    =
    r_{\mathrm{signal}}(f,t_{i+1})
    -
    r_{\mathrm{signal}}(f,t_i).
\end{equation}
This is consistent with the continuous formulation because
\[
    \Delta r_{\mathrm{signal}}(f,t_i,\Delta t_i)
    =
    \int_{t_i}^{t_{i+1}}
    \partial_t r_{\mathrm{signal}}(f,t)\,dt.
\]
The discrete raw score is then
\begin{equation}
\label{eq:app_discrete_raw_csf_score}
    \widetilde{w}_{\mathrm{CSFlow},i}
    =
    \frac{
        \sum_{f\in\Omega}
        \Delta r_{\mathrm{signal}}(f,t_i,\Delta t_i)
        \cdot CSF(f)
    }{
        \sum_{f\in\Omega}
        \Delta r_{\mathrm{signal}}(f,t_i,\Delta t_i)
    }.
\end{equation}
Finally, we normalize the interval weights to have mean one over time:
\begin{equation}
\label{eq:app_discrete_mean_one_weight}
    w_{\mathrm{CSFlow},i}
    =
    \frac{
        \widetilde{w}_{\mathrm{CSFlow},i}
    }{
        \sum_{j=0}^{T-1}
        \Delta t_j
        \widetilde{w}_{\mathrm{CSFlow},j}
    }.
\end{equation}
For uniformly spaced intervals, this reduces to division by the empirical mean:
\[
    w_{\mathrm{CSFlow},i}
    =
    \frac{
        \widetilde{w}_{\mathrm{CSFlow},i}
    }{
        \frac{1}{T}
        \sum_{j=0}^{T-1}
        \widetilde{w}_{\mathrm{CSFlow},j}
    }.
\]

\subsection{Constructing the smooth inverse CDF}
\label{app:smooth-inverse-cdf}

The CSF-based weights define a time-dependent weighting function \(w_{\mathrm{CSFlow}}(t)\). 
After mean-one normalization, we have
\[
    \int_0^1 w_{\mathrm{CSFlow}}(t)\,dt = 1,
\]
so \(w_{\mathrm{CSFlow}}(t)\) can be interpreted as a probability density over denoising time. 
We define its cumulative distribution function as
\[
    CDF(t)
    =
    \int_0^t w_{\mathrm{CSFlow}}(\tau)\,d\tau.
\]
In practice, \(w_{\mathrm{CSFlow}}\) is computed on a finite timestep grid 
\[
    0=t_0<t_1<\dots<t_T=1.
\]
We approximate the CDF at the grid points using numerical integration:
\[
    CDF_i
    =
    \frac{
        \sum_{j=0}^{i-1}
        w_{\mathrm{CSFlow}}(t_j)\Delta t_j
    }{
        \sum_{j=0}^{T-1}
        w_{\mathrm{CSFlow}}(t_j)\Delta t_j
    },
    \qquad
    \Delta t_j=t_{j+1}-t_j.
\]
The denominator ensures that \(CDF_T=1\) numerically. 
We then construct a smooth monotone approximation of the inverse CDF from the pairs
\[
    (CDF_i,t_i)_{i=0}^{T}.
\]
We denote this approximation by \(CDF^{-1}_{\mathrm{smooth}}\).

\subsection{Training modification: weighted time sampling}
\label{app:weighted-training-sampling}

For training, we sample timesteps from the CSF-weighted time distribution using inverse transform sampling. 
We first sample
\[
    u \sim \mathcal{U}(0,1),
\]
and then set
\[
    t = CDF^{-1}_{\mathrm{smooth}}(u).
\]
This biases training toward denoising times with larger \(w_{\mathrm{CSFlow}}(t)\), i.e. toward intervals where the newly recovered frequencies are more perceptually important.

\subsection{Inference modification: weighted step sizes}
\label{app:weighted-inference-steps}

For inference, we use the same smooth inverse CDF to construct a non-uniform sampling grid. 
Given \(N\) inference steps, we evaluate the inverse CDF at evenly spaced quantiles:
\[
    q_k = \frac{k}{N},
    \qquad
    k=0,1,\dots,N,
\]
and define
\[
    \tilde{t}_k
    =
    CDF^{-1}_{\mathrm{smooth}}(q_k).
\]
The resulting grid \(\{\tilde{t}_k\}_{k=0}^{N}\) has step sizes
\[
    \Delta \tilde{t}_k
    =
    \tilde{t}_{k+1}-\tilde{t}_k.
\]
Since larger values of \(w_{\mathrm{CSFlow}}(t)\) make the CDF increase faster, the inverse CDF changes more slowly in those regions. 
Thus, evenly spaced quantiles produce smaller timestep gaps where \(w_{\mathrm{CSFlow}}\) is large, resulting in a denser inference grid in perceptually important denoising intervals.

\section{Conversion from cycles per pixel to cycles per degree}
\label{app:csf-conversion}

The CSF is defined over spatial frequencies in cycles per degree of visual angle, while our Fourier frequencies are measured in cycles per pixel. 
Let \(p\) be the physical pixel size and \(d\) the viewing distance. 
The visual angle subtended by one pixel is
\[
    \theta_{\mathrm{pix}}
    =
    2\arctan\left(\frac{p}{2d}\right)
    \cdot \frac{180}{\pi}
    \quad \text{degrees/pixel}.
\]
Therefore, an image frequency \(f\) measured in cycles per pixel corresponds to
\[
    f_{\mathrm{cpd}}
    =
    \frac{f}{\theta_{\mathrm{pix}}}
\]
cycles per degree. 
In our experiments, we use \(p=0.0114\) cm and \(d=50\) cm.

\section{More results}

\begin{figure}[h]
    \centering
    \includegraphics[width=1\linewidth]{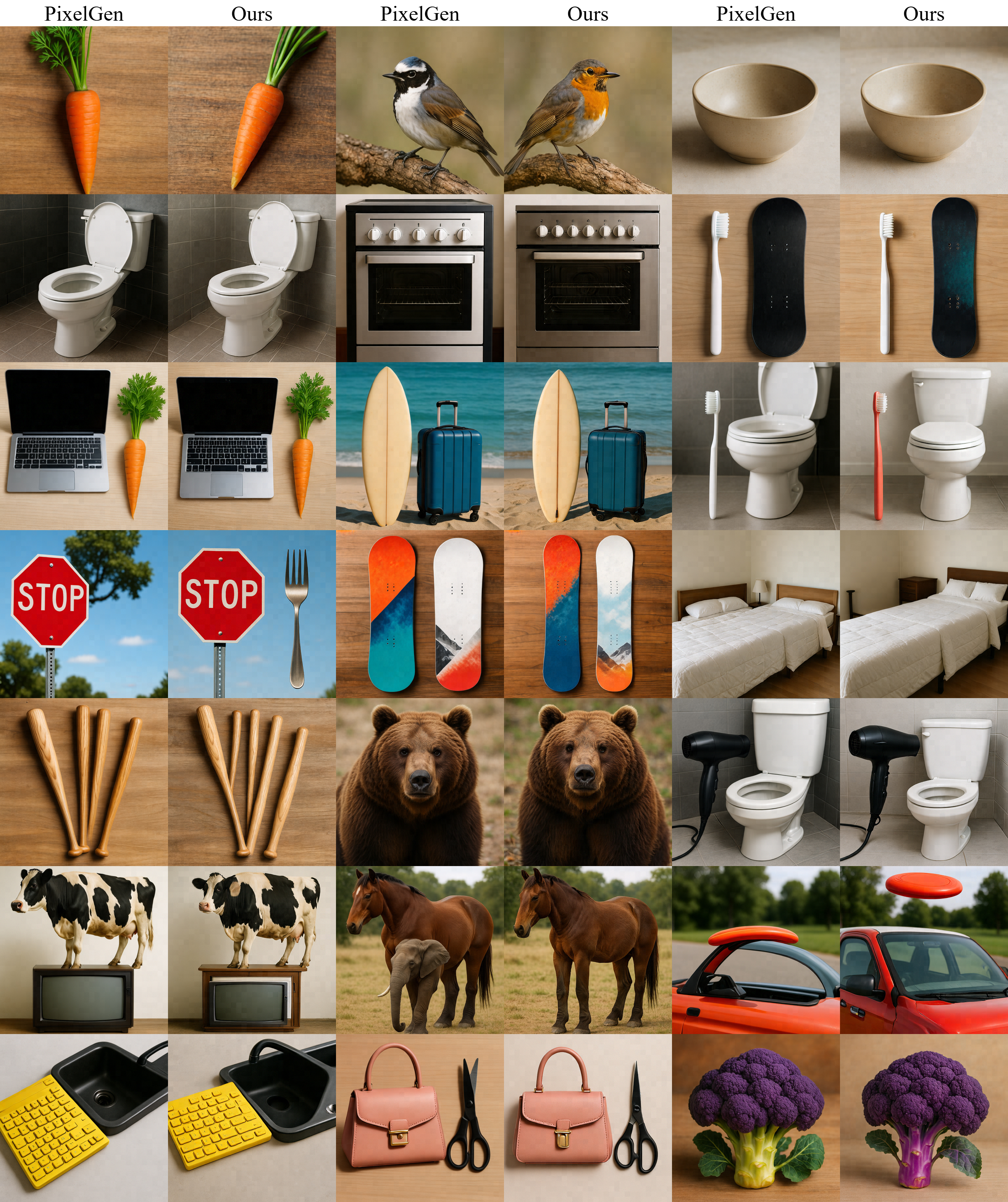}
    \caption{Pair comparisons between PixelGen-XXL/16 and CSFlow-weighted version (ours). Image pair indices were sampled randomly.}
    \label{fig:placeholder}
\end{figure}


\end{document}